\documentclass{article}

\usepackage{microtype}
\usepackage{booktabs} 

\usepackage{tabularx}
\usepackage{wrapfig}
\usepackage[utf8]{inputenc} 
\usepackage[T1]{fontenc}    
\usepackage{url}            
\usepackage{tgbonum}
\usepackage{amsfonts}       
\usepackage{amsmath}
\usepackage{amssymb}
\usepackage{nicefrac}       
\usepackage{microtype}      
\usepackage{xcolor}
\usepackage{mathtools}
\usepackage{float}
\usepackage{multirow}
\usepackage{pifont}
\usepackage{multicol}
\usepackage{siunitx}
\usepackage{textcomp}
\usepackage{bbm}
\usepackage{bm}
\usepackage{xcolor}
\DeclareMathOperator*{\argmax}{argmax}

\newcommand{\R}{\mathbb{R}}

\renewcommand{\P}{\mathbb{P}} 
\newcommand{\E}{\mathbb{E}}
\newcommand{\K}{\mathbb{K}}

\newcommand{\cA}{{\mathcal A}}

\newcommand{\cL}{{\mathcal L}}
\newcommand{\cM}{{\mathcal M}}

\newcommand{\cP}{{\mathcal P}} 
\newcommand{\cQ}{{\mathcal Q}} 
 
\newcommand{\cS}{{\mathcal S}} 

\newcommand{\cU}{{\mathcal U}}
\newcommand{\cV}{{\mathcal V}}

\newcommand{\cZ}{{\mathcal Z}}

\newcommand{\bma}{{\bm a}}

\newcommand{\bmd}{{\bm d}}

\newcommand{\bmk}{{\bm k}}

\newcommand{\bms}{{\bm s}}

\newcommand{\bmu}{{\bm u}}
\newcommand{\bmv}{{\bm v}}
\newcommand{\bmw}{{\bm w}}

\newcommand{\bmx}{{\bm x}}
\newcommand{\bmy}{{\bm y}}
\newcommand{\bmz}{{\bm z}}

\newcommand{\bmA}{{\bm A}}
\newcommand{\bmB}{{\bm B}}
\newcommand{\bmC}{{\bm C}}

\newcommand{\bmR}{{\bm R}}

\newcommand{\bmtheta}{{\bm \theta}}

\newcommand{\cvar}{\textrm{CVaR}}
\newcommand{\var}{\textrm{VaR}}
\newcommand{\relu}{\textrm{ReLU}}

\newenvironment{proof}{\paragraph{Proof:}}{\hfill$\square$}

\usepackage{graphicx}
\usepackage[labelformat=simple]{subcaption}

\newtheorem{prop}{Proposition}

\newtheorem{thm}{Theorem}

\newtheorem{defn}{Definition}
\renewcommand\thesubfigure{(\alph{subfigure})}


\usepackage[accepted]{icml2022}
\usepackage{etoolbox}
\usepackage{xcolor}
\usepackage{colour-blind}
\usepackage[draft]{minted}

\DeclareFixedFont{\ttb}{T1}{txtt}{bx}{n}{12} 
\DeclareFixedFont{\ttm}{T1}{txtt}{m}{n}{12}  

\definecolor{deepblue}{rgb}{0,0,0.5}
\definecolor{deepred}{rgb}{0.6,0,0}
\definecolor{deepgreen}{rgb}{0,0.5,0}

\usepackage{listings}

\newcommand\pythonstyle{\lstset{
language=Python,
basicstyle=\ttm,
morekeywords={self},              
keywordstyle=\ttb\color{deepblue},
emph={MyClass,__init__},          
emphstyle=\ttb\color{deepred},    
stringstyle=\color{deepgreen},
frame=tb,                         
showstringspaces=false
}}

\lstnewenvironment{python}[1][]
{
\pythonstyle
\lstset{#1}
}
{}


\newcommand\pythoninline[1]{{\pythonstyle\lstinline!#1!}}

\icmltitlerunning{Safety Augmented (Saute) RL}

\begin{document}

\twocolumn[
\icmltitle{Saut\'e RL: Almost Surely Safe Reinforcement Learning \\
Using State Augmentation}



\icmlsetsymbol{equal}{*}

\begin{icmlauthorlist}
\icmlauthor{Aivar Sootla}{huawei}
\icmlauthor{Alexander I. Cowen-Rivers}{huawei,tud}
\icmlauthor{Taher Jafferjee}{huawei}
\icmlauthor{Ziyan Wang}{huawei}
\icmlauthor{David Mguni}{huawei}
\icmlauthor{Jun Wang}{ucl}
\icmlauthor{Haitham Bou-Ammar}{huawei,hucl}
\end{icmlauthorlist}

\icmlaffiliation{tud}{Technische Universit\"at Darmstadt.}
\icmlaffiliation{huawei}{Huawei R\&D UK.}
\icmlaffiliation{ucl}{University College London}
\icmlaffiliation{hucl}{Honorary Lecturer at University College London}

\icmlcorrespondingauthor{Aivar Sootla}{aivar.sootla@huawei.com}
\icmlcorrespondingauthor{Haitham Bou-Ammar}{haitham.ammar@huawei.com}
\icmlcorrespondingauthor{Jun Wang}{jun.wang@cs.ucl.ac.uk}

\icmlkeywords{Reinforcement Learning, Lagrangian Method, State Augmentation}

\vskip 0.3in
]



\printAffiliationsAndNotice{}  

\begin{abstract}
Satisfying safety constraints almost surely (or with probability one) can be critical for the deployment of Reinforcement Learning (RL) in real-life applications. For example, plane landing and take-off should ideally occur with probability one. We address the problem by introducing Safety Augmented (Saut\'e) Markov Decision Processes (MDPs), where the safety constraints are eliminated by augmenting them into the state-space and reshaping the objective. We show that Saut\'e MDP satisfies the Bellman equation and moves us closer to solving Safe RL with constraints satisfied almost surely. We argue that Saut\'e MDP allows viewing the Safe RL problem from a different perspective enabling new features. For instance, our approach has a plug-and-play nature, i.e., any RL algorithm can be ``Saut\'eed''. Additionally, state augmentation allows for policy generalization across safety constraints. We finally show that Saut\'e RL algorithms can outperform their state-of-the-art counterparts when constraint satisfaction is of high importance.
\end{abstract}


\section{Introduction}
\label{introduction}
Reinforcement Learning (RL) offers a great framework for solving sequential decision-making problems using interactions with an environment~\cite{sutton2018reinforcement}. In this context, safety constraint satisfaction and robustness are vital properties for the successful deployment of RL algorithms. Safe RL has received significant attention in recent years, but many unsolved challenges remain, e.g., constraint satisfaction during and after training, efficient algorithms, etc. While different types of constraints were considered in the past, e.g. averaged over trajectory, CVaR, and chance constraints~\cite{chow2017risk}, satisfying constraints almost surely (or with probability one) received less attention to date. This problem is quite important as many safety-critical applications require constraint satisfaction almost surely. For example, we ideally would like to guarantee a safe plane landing and take-off with probability one, while landing with a probability of $0.99$ may not be sufficient. Similarly, an autonomous vehicle should be able to stay in the lane with probability one, while keeping this constraint ``on average'' can potentially lead to catastrophic consequences. 

We represent the safety constraints as a (discounted) sum of nonnegative costs bounded from above by (what we call) \emph{a safety budget}. As the safety costs are accumulated during an episode, there is less freedom in choosing a safe trajectory and hence the available safety budget diminishes over time. We can treat the remaining safety budget as a new state quantifying the risk of constraint violation. This idea can be traced back to classical control methods of augmenting the safety constraints into the state-space~cf.~\cite{daryin2005nonlinear}, however, we adapt it to stochastic problems and RL applications. First, we reshape the objective to take into account the remaining safety budget by assigning infinite values if the budget was spent. Thus we obtain Safety AUgmenTEd (Saut\'e) Markov Decision Process (MDP). Incorporating the constraint into the objective enforces this constraint for every controlled trajectory which leads to constraint satisfaction with probability one. Furthermore, Saut\'e MDP satisfies the Bellman equation justifying the use of \emph{any critic-based RL method}. Finally, since the value functions now additionally depend on the safety information, the cost-to-go in our setting implicitly includes information about possible safety constraint violations. 

Employing the state augmentation for safe RL has been explored in the past. For example,~\cite{calvo2021state} augmented the Lagrangian multiplier into the state space, while keeping the Lagrangian objective. The Lagrangian multiplier contains information about the safety cost that could be exploited by the policy. We argue that such a construction is not required as we can track the accumulated safety cost instead. \cite{chow2017risk} augmented the CVaR constraint in the state-space, however, their augmentation had a realization suitable only for the CVaR constraint. They also had to resort to the Lagrangian approach for solving the problem due to (again) the specificity of the CVaR constraint.  We discuss in detail these state augmentation methods in Appendix~\ref{app:state_augmentation}, but note that \cite{calvo2021state}, \cite{chow2017risk} have not extended their methods to modern model-free and model-based RL methods such as trust region policy optimization (TRPO)~\cite{schulman2015trust}, proximal policy optimization (PPO)~\cite{schulman2017proximal}, soft actor-critic (SAC)~\cite{haarnoja2018soft}, model-based policy optimization (MBPO)~\cite{janner2019trust}, probabilistic ensembles with trajectory sampling (PETS)~\cite{chua2018deep}.

We argue that our state augmentation approach allows viewing the safe RL problem from a different angle. Our state augmentation should be seen as a modification of an environment rather than an RL algorithm. It is implemented as a wrapper around OpenAI gym~\cite{brockman2016openai} environments, which allows ``saut\'eing'' \emph{any} RL algorithm. While we mostly test with model-free approaches (PPO, TRPO, SAC), the model-based methods are also ``saut\'eable'', which we illustrate on MBPO and PETS. We then demonstrate that a policy trained on one safety budget can generalize to other budgets. Further, if we randomly sample the initial safety state (i.e., the safety budget), then we can learn a policy \emph{for all safety budgets} in a set. 

\textbf{Related work.} Safe RL is based on the constrained MDP (c-MDP) formalism~\cite{altman1999constrained}, which has spawned many directions of research. A topic of considerable interest is safe exploration~\cite{turchetta2016safe, koller2018learning, dalal2018safe,  wachi2018safe, zimmer2018safe, bharadhwaj2020conservative}, where the goal is to ensure constraint satisfaction while exploring for policy improvement. Another line of research is to use classical control methods and concepts to learn a safe policy~\cite{chow2018lyapunov, chow2019lyapunov, berkenkamp2017safe, ohnishi2019barrier, cheng2019end, akametalu2014reachability, deanlqr2019, Fisac2019}. In these works, the authors also make strong prior assumptions such as partial knowledge of the model, and an initial safe policy to define a problem that can be solved. 

Besides classical control other tools were used in safe RL, e.g., a two-player framework with a task agent and a safety agent cooperating to solve the task~\cite{mguni2021desta}, a curriculum learning approach, where the teacher resets the student violating safety constraints~\cite{turchetta2020safe}, learning to reset if the safety constraint is violated~\cite{eysenbach2018leave}. 

While these are interesting topics of research, the classical RL setting with minimal assumptions is arguably more common in RL literature. A considerable effort was made in solving safe RL in the model-based setting~\cite{polymenakos2020safepilco, cowen2020samba, kamthe2018data}. In these works, the model was learned using Gaussian processes~\cite{GPbook, deisenroth2011pilco} allowing for an effective uncertainty estimation albeit with scalability limitations. Constrained versions of model-free RL algorithms such as TRPO, PPO, and SAC were also developed. For example, CPO~\cite{achiam2017constrained} generalized TRPO to a constrained case by explicitly adding constraints to the trust region update. \cite{raybenchmarking} were the first, to our best knowledge, to implement the Lagrangian version of PPO, SAC, and TRPO. These methods largely followed~\cite{chow2017risk}, who, however, considered an RL problem with a conditional-value-at risk (CVaR) constraints instead of average constraints. While Chow et al used classical policy gradient algorithms, recently this work was extended to PPO~\cite{cowen2020samba} and SAC~\cite{yang2021wcsac} algorithms. The Lagrangian method and CPO were improved in \cite{stooke2020responsive,yang2019projection}, respectively.  Finally, \cite{ding2020natural} proposed a natural policy gradient for c-MDPs.

\section{Vanilla RL and Safe RL with Constraints}
We first review basic RL and MDP concepts and adapt some definitions to our setting.
\begin{defn}
We define a Markov Decision Process (MDP) as a tuple $\cM = \langle  \cS, \cA, \cP, c, \gamma_c \rangle$, where $\cS$ is the state space; $\cA$ is the action space; $\gamma_c \in (0, 1)$ is the task discount factor, $\cP: \cS \times \cA \times \cS \rightarrow [0, 1]$, i.e., $\bms_{t+1} \sim p(\cdot | \bms_t , \bma_t)$, and $c:\cS \times \cA \rightarrow [0, +\infty)$ is the task cost. We associate the following optimization problem with the MDP:
\begin{equation}\label{prob:vanilla_mdp}
\begin{aligned}
        &\min\limits_{\pi}~ \E^\pi_\bms J_{\rm task},\,\, \\&J_{\rm task}\triangleq \sum\limits_{t=0}^\infty  \gamma_c^t c(\bms_t, \bma_t) | \bma_t \sim \pi,
 \end{aligned}
\end{equation}
where $\E^\pi_\bms$ is the mean over the transitions and the policy $\pi$. 
\end{defn}
The objective $J_{\rm task}$ implicitly depends on the initial state $\bms_0$ distribution and the policy $\pi$, but we drop this dependence to simplify notation. Throughout we assume that spaces $\cS$ and $\cA$ are continuous, e.g., $\cS\subset\R^{n_s}$ and $\cA\subset\R^{n_a}$, where $n_s$ and $n_a$ are the dimensions of the state and action spaces. 
We consider an infinite-horizon problem for theoretical convenience, in practice, however, we use the finite horizon (or episodes) setting as common in the RL literature~\cite{sutton2018reinforcement}. We also minimize the objective adapting the notation by~\cite{chow2017risk}. To solve the problem, the value functions are typically introduced:
\begin{equation*}
        \begin{aligned}
            V(\pi, \bms_0) = \E^\pi_\bms J_{\rm task}, \qquad\\
            V^\ast(\bms) = \min\limits_{\pi} V(\pi, \bms),
        \end{aligned}
\end{equation*}
where with a slight abuse of notation $\E^\pi_\bms$ is not averaging over the initial state $\bms_0$. In general, finding the representation of the optimal policy is not easy and the optimal policy can depend on the past trajectory, i.e., past state-action pairs. Remarkably, under some mild assumptions (see, for example, Appendix~\ref{app:theory}) the optimal policy solving Equation~\ref{prob:vanilla_mdp} depends only on the current state, i.e., $\bma_t \sim \pi(\cdot | \bms_t)$. This is a consequence of the Bellman equation which holds for the optimal value function $V^\ast_{\rm task}$:
\begin{equation*}
    V^\ast_{\rm task}(\bms) = \min\limits_{\bma\in\cA} \left(c(\bms, \bma) +  \gamma_c\mathbb{E}_{\bms'\sim p(\cdot| \bms, \bma)} V^\ast_{\rm task}(\bms')\right).
\end{equation*}
Additionally, using the Bellman equation we can also reduce the cost-to-go estimation to a static optimization rather than dynamic\footnote{There are still some non-stationary components in learning the value functions since the new data is constantly acquired.}. Both of these properties are at the heart of all RL algorithms. Now we can move on to constrained MDPs.
\begin{defn}\label{defn:constrained_mdp}
The constrained MDP (c-MDP) is a tuple $\cM_c = \langle  \cS, \cA, \cP, c, \gamma_c, l, \gamma_l \rangle$ where additional terms are the safety cost $l:\cS \times \cA \rightarrow [0, +\infty)$ and the safety discount factor $\gamma_l \in(0, 1)$. The associated optimization problem is:
\begin{subequations}
\begin{align}
        \min\limits_{\pi}~&\mathbb{E}^\pi_\bms  J_{\rm task} 
        \label{prob:average_objective}\\
     \text{s. t.: }&\mathbb{E}^\pi_\bms  J_{\rm safety}\ge 0,\,\, \\
     &J_{\rm safety} \triangleq d - \sum\limits_{t=0}^\infty  \gamma_{l}^t l(\bms_t, \bma_t).  \label{prob:average_constraint}
\end{align}\label{prob:average_constrained_mdp}
\end{subequations}
We will refer to the nonnegative value $d$ as \emph{the safety budget}. 
\end{defn}
 The average over the accumulated cost can be replaced by another statistic. For example, \cite{chow2017risk} proposed  to use conditional value at risk $\cvar_\alpha(X) = \min\limits_{\nu \in \R} \left(\eta + \frac{1}{1-\alpha} \mathbb{E}~\relu( X - \nu) \right)$ for $\alpha\in(0,1)$:
\begin{equation*}
\begin{aligned}
    &\min\limits_{\pi, \nu}~\mathbb{E}^\pi_\bms  J_{\rm task},\\
    &\text{s. t.: }\nu + \frac{1}{1-\alpha}\mathbb{E}^\pi_\bms ~\relu\left(d- J_{\rm safety} - \nu\right) \le d.  
\end{aligned}
\end{equation*}
In both cases, the problem can be solved using the Lagrangian approach, cf.~\cite{chow2017risk}. For example, in the case of Equation~\ref{prob:average_constrained_mdp} we can rewrite this problem as 
\begin{equation*} 
\begin{aligned}
    \min\limits_\pi &~~\E^\pi_\bms J_{\rm task} - \lambda^\ast  \E^\pi_\bms  J_{\rm safety}, \text{ where}\\
    \lambda^\ast &= \begin{cases}
  0 & \text{ if } \E^\pi_\bms  J_{\rm safety} \ge 0,\\
  +\infty &  \text{ if } \E^\pi_\bms  J_{\rm safety} < 0.
  \end{cases}
\end{aligned}
\end{equation*}
Instead of optimizing over an indicator $\lambda^\ast$, one formulates an equivalent min-max problem~\cite{bertsekas1997nonlinear}:
\[
    \min_{\pi} \max\limits_{\lambda \ge 0} \widehat J \triangleq \E^\pi_\bms  J_{\rm task} - \lambda  \E^\pi_\bms  J_{\rm safety}.
\]
Indeed, for every fixed policy $\pi$, the optimal $\lambda$ is equal to $\lambda^\ast$: if $\E^\pi_\bms J_{\rm safety}$ is nonnegative then there is no other choice but setting $\lambda$ to zero, if the $\E^\pi_\bms J_{\rm safety}$ is negative then maximum over $\lambda$ is $+\infty$~cf.~\cite{bertsekas1997nonlinear}. Thus we obtained the Lagrangian formulation of c-MDP. Now for actor-critic algorithms we need to estimate an additional value function $V_{\rm safety}(\pi, \bms_0) = \E^\pi_\bms J_{\rm safety}$ tracking the safety cost. 
Note that it is not clear if the optimal policy can be found using the commonly used representation $\bma_t \sim \pi(\cdot | \bms_t)$ since it is not clear what is the equivalent of the Bellman equation in the constrained case. Hence, the common policy representation (i.e., $\bma_t \sim \pi(\cdot | \bms_t)$) can create some limitations. Even intuitively the actions should depend on the safety constraint in some way, but they do not. 

\section{Saut\'e RL: Safety AUgmenTEd Reinforcement Learning}
\subsection{Main Idea in the Deterministic Case} \label{sec:main_idea}
We start by presenting the main idea in the deterministic case in order to simplify presentation. We consider a c-MDP with deterministic transitions, costs and one constraint:
\begin{align}
        \min\limits_{\pi}~&J_{\rm task}, \label{prob:cmdp_det}\\
        \text{s. t.: }& J_{\rm safety}\ge  0.\label{constraint:cmdp_det}
\end{align}
and its Lagrangian formulation 
\begin{equation}\label{prob:cmdp_lagrangian_det}
    \min_{\pi} \max\limits_{\lambda \ge 0} \widehat J \triangleq  J_{\rm task} - \lambda J_{\rm safety}.
\end{equation}

We adapt the ideas by~\cite{daryin2005nonlinear} to the RL case, and propose to reduce Problem~\ref{prob:cmdp_det} to an MDP. In particular, we remove the constraint by using state augmentation and then by reshaping the cost.

Let us take a step back and note that enforcing the constraint in Equation~\ref{prob:cmdp_det} is equivalent to enforcing the infinite number of the following constraints: 
\begin{equation}\tag{\ref{prob:cmdp_det}b$^\ast$}
\sum\limits_{k=0}^t \gamma_{l}^k l(\bms_k, \bma_k) \le d, \,\,\forall t\ge 0.
\end{equation}
This is because we assumed that the instantaneous cost is nonnegative and the accumulated safety cost cannot decrease. Therefore, if the constraint is violated at some time $t_v$, it will be violated for all $t \ge t_v$. It seems counter-intuitive to transform a problem with a single constraint into a problem with an infinite number of constraints. However, our goal here is to incorporate the constraints into the instantaneous task cost, thus taking into account safety while solving the task. This will be easier to perform while considering the constraint for all times $t$. To do so we track the remaining safety budget to assess constraint satisfaction at every time step. The remaining safety budget at time $t$ can be computed as $\bmw_t = d - \sum_{m=0}^t \gamma_l^m l(\bms_m, \bma_m)$, however, we will track a scaled version of $\bmw_t$, namely, $\bmz_{t} = \bmw_{t-1}/\gamma_l^{t}$, which has a time-independent update:
\begin{align*}
\bmz_{t+1} &=  
    (\bmw_{t-1} -  \gamma_l^t l(\bms_t, \bma_t))/ \gamma_l^{t+1}  \\
    &=(\bmz_t -  l(\bms_t, \bma_t) )/ \gamma_l, \\
    \bmz_0 &= \bmd.
\end{align*}
Since the variable $\bmz_t$ is Markovian (i.e., $\bmz_{t+1}$ depends only on $\bmz_t$, $\bma_t$ and $\bms_t$), we can augment our state-space with the variable $\bmz_t$. Now since we enforce the constraint $\bmz_t \ge 0$ for all times $t\ge 0$, we can reshape the instantaneous task cost to account for the safety constraint:
\begin{equation*}
    \begin{aligned}
         \widetilde c(\bms_t,\bmz_t, \bma_t) = 
                    \begin{cases}
                    c(\bms_t, \bma_t)  & \bmz_t \ge 0,  \\
                    +\infty & \bmz_t <0. 
                    \end{cases}
    \end{aligned}
\end{equation*}
Now we can formulate the problem \emph{without} constraints
\begin{equation}\label{prob:augmented_mdp}
        \min\limits_{\pi}~\sum\limits_{t=0}^\infty \gamma_c^t \widetilde c(\bms_t, \bmz_t, \bma_t).
\end{equation}

Note that the safety cost $l$, and the safety discount factor $\gamma_l$ are now part of the transition. The variable $\bmz_t$ can be intuitively understood as a risk indicator for constraint violation. The policy can learn to tread carefully for some values of $\bmz_t$ thus learning to interpret $\bmz_t$ as the distance to constraint violation. Note that the augmented state by~\cite{chow2017risk} tracks a value related to the CVaR computation rather than the remaining safety budget (see Appendix~\ref{app:state_augmentation_chow}), while the augmented state by~\cite{calvo2021state} is the Lagrange multiplier (see Appendix~\ref{app:state_augmentation_calvo}). In both cases, the augmented state by itself is not a very intuitive risk indicator for safety during an episode. Furthermore, in our case the safety budget $d$ is the initial safety state, which enables generalization across safety budgets as we show in our experiments. This was not done by~\cite{calvo2021state} and~\cite{chow2017risk}.\looseness=-1

To summarize, we have effectively showed the following:
\begin{thm} \label{thm:determinstic_equivalence}
An optimal policy for any of Equations~\ref{prob:cmdp_det},~\ref{prob:cmdp_lagrangian_det} and~\ref{prob:augmented_mdp} is also an optimal policy for all of these problems.
\end{thm}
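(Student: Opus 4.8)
The plan is to prove that all three problems possess the same set of policies with finite objective, and the same objective value on that set; identical minimizers then follow immediately. I would organize the argument as two reductions to the constrained Problem~\ref{prob:cmdp_det}, followed by transitivity.

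\emph{Step 1 (Lagrangian reduction).} For Problems~\ref{prob:cmdp_det} and~\ref{prob:cmdp_lagrangian_det} I would reuse the indicator computation already sketched before the statement: for a fixed policy $\pi$ the inner maximization satisfies $\max_{\lambda\ge 0}\big(J_{\rm task}-\lambda J_{\rm safety}\big)=J_{\rm task}$ when $J_{\rm safety}\ge 0$ and $=+\infty$ when $J_{\rm safety}<0$. Hence $\widehat J$ agrees with $J_{\rm task}$ on every safe policy and equals $+\infty$ on every unsafe one, so the outer minimization of $\widehat J$ over $\pi$ has exactly the same minimizers as the minimization of $J_{\rm task}$ subject to $J_{\rm safety}\ge 0$.

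\emph{Step 2 (Saute reduction).} For Problems~\ref{prob:cmdp_det} and~\ref{prob:augmented_mdp} I would argue in three sub-steps. First, because $l\ge 0$ the partial sums $\sum_{k=0}^t\gamma_l^k l(\bms_k,\bma_k)$ are nondecreasing in $t$, so Constraint~\ref{constraint:cmdp_det}, i.e. $\sum_{k=0}^\infty \gamma_l^k l\le d$, holds if and only if every partial sum is bounded by $d$; this is the infinite family noted in the text. Second, unwinding the recursion $\bmz_{t+1}=(\bmz_t-l(\bms_t,\bma_t))/\gamma_l$ from $\bmz_0=d$ gives $\bmz_t=\gamma_l^{-t}\big(d-\sum_{k=0}^{t-1}\gamma_l^k l(\bms_k,\bma_k)\big)$, so $\bmz_t\ge 0$ for all $t$ is exactly the statement that every partial sum is bounded by $d$. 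Third, by the definition of $\widetilde c$, the Saute objective $\sum_t \gamma_c^t \widetilde c(\bms_t,\bmz_t,\bma_t)$ equals $J_{\rm task}$ along any trajectory that stays safe (all $\bmz_t\ge 0$) and equals $+\infty$ along any trajectory that violates safety at some finite time. Thus the Saute objective is the same penalized objective as $\widehat J$ in Step 1, and again has identical minimizers to Problem~\ref{prob:cmdp_det}.

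\emph{Step 3 (conclusion).} Transitivity of the two reductions gives the claim. The step I expect to require the most care is the identification of the policy classes across formulations: Problems~\ref{prob:cmdp_det}--\ref{prob:cmdp_lagrangian_det} use policies on $\bms_t$ (or on histories), whereas Problem~\ref{prob:augmented_mdp} uses policies on the augmented state $(\bms_t,\bmz_t)$. In the deterministic setting this is harmless, since $\bmz_t$ is itself a deterministic function of the past states and actions (and of $d$); I would make this precise by exhibiting a cost- and feasibility-preserving bijection between realizable trajectories under the two policy parametrizations, so that ``optimal policy'' refers unambiguously to the same object in every problem. I would also dispatch the degenerate case in which no safe policy exists, where all three optimal values are $+\infty$ and the statement holds vacuously.
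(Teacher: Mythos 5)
Your proposal is correct and follows essentially the same route as the paper: the paper's ``proof'' is precisely the derivation preceding the theorem (the indicator argument for the Lagrangian, the equivalence of Constraint~\ref{constraint:cmdp_det} with the infinite family of partial-sum constraints via nonnegativity of $l$, the identification of that family with $\bmz_t \ge 0$ through the recursion for $\bmz_t$, and the infinite-penalty cost reshaping). Your additional care about identifying policy classes across formulations (since $\bmz_t$ is a deterministic function of the history) and about the degenerate infeasible case goes slightly beyond what the paper makes explicit, but it is the same argument.
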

 Next, we discuss how to deal with the infinity in the cost function and we generalize this idea to the stochastic case.

\subsection{Safety Augmented Markov Decision Processes}
The derivation in the general case are fairly similar at first and we obtain the following transition functions as above:
\begin{equation}
\begin{aligned}
     \bms_{t+1} &\sim p(\cdot | \bms_t , \bma_t), &&\bms_0 \sim {\cal S}_0, \\
     \bmz_{t+1} &= (\bmz_t -  l(\bms_t, \bma_t))/ \gamma_l, &&\bmz_0 = d.
\end{aligned}\label{eq:augmented_mdp}
\end{equation}

Note that the transitions in Equation~\ref{eq:augmented_mdp} are still Markovian and non-stationary, which simplifies the further algorithm development. 

In order to avoid dealing with infinite values in the cost $\widetilde c(\bms_t, \bmz_t, \bma_t)$, we introduce a proxy problem with a computationally friendlier cost:
\begin{equation}
\widetilde c_n(\bms_t,\bmz_t, \bma_t) = 
                    \begin{cases}
                    c(\bms_t, \bma_t)  & \bmz_t \ge 0,  \\
                    n & \bmz_t <0,
                    \end{cases}
\label{eq:bounded_augmented_reward}
\end{equation}
where $n$ is a hyper-parameter and introduce the Safety AUgmenTEd (Saut\'e) MDP $\widetilde \cM_n$.
\begin{defn}
Given a c-MDP $\cM_c = \langle  \cS, \cA,\cP, c, \gamma_c, l, \gamma_l \rangle$, we define a Safety Augmented Markov Decision Process (Saut\'e MDP) as a tuple $\widetilde \cM_n = \langle  \widetilde \cS, \cA, \widetilde\cP, \widetilde c_n, \gamma_c \rangle$, where $\widetilde \cS = \cS\times \cZ$; 
$\widetilde\cP: \widetilde\cS \times \cA \times \widetilde\cS  \rightarrow [0, 1]$ and defined in Equation~\ref{eq:augmented_mdp}, and
$\widetilde c_n:\widetilde\cS \times \cA \rightarrow [0,+\infty)$. We associate the following problem with Saut\'e MDP:
\begin{equation}\label{prob:saute_mdp}
        \min\limits_{\pi}~ \mathbb{E} \sum\limits_{t=0}^\infty \gamma_c^t \widetilde c_n(\bms_t, \bmz_t, \bma_t).
\end{equation}
\end{defn}
Now we need to answer two questions: a) what is the optimal representation of $\pi_n^\ast$; b) what is the relation of the MDPs $\widetilde \cM_n$ and $\widetilde \cM_\infty$. While the first question appears to be quite straightforward, the second requires revising results from stochastic optimal control. For readers' convenience, we summarize these results here. We make the following assumptions:

\textbf{A1.} The functions $\widetilde c_n(\bms, \bmz, \bma)$ are bounded, measurable, nonnegative, and lower semi-continuous on $\widetilde\cS \times \cA$;\\
\textbf{A2.} $\cA$ is compact;\\
\textbf{A3.} The transition $\cP$ is weakly continuous on $\widetilde \cS\times\cA$, i.e., for any continuous and bounded function $u$ on $\widetilde\cS$ the map $(\bms,\bmz, \bma) \rightarrow \int_{\widetilde \cS}u(\bmx,\bmy) \cP(d\bmx, d\bmy| \bms,\bmz, \bma)$ is continuous.

Note that these assumptions are rather mild and satisfied in many RL tasks. For instance, the task costs are often continuous and bounded, which is enough to satisfy Assumption 1 for the reshaped costs $\widetilde c_n$. Similarly, compactness of the action space is typically assumed in the RL setting as well. Finally, Assumption A3 is satisfied, if, for example, the transition function $\cP$ is a Gaussian with continuous mean and variance (cf.~\cite{arapostathis1993discrete}). 

Under these assumptions we can prove the following results (the proof is in Appendix~\ref{app:theory}).
\begin{thm} \label{thm:optimal_policy} Consider 
a Saut\'e MDP $\widetilde \cM_n$ satisfying Assumptions A1-A3 with the associated Equation~\ref{prob:saute_mdp}, then: 

a) for any finite $n$ the Bellman equation is satisfied, i.e., there exists a function $V_n^\ast(\bms, \bmz)$ such that 
\begin{equation*}
    V_n^\ast(\bms, \bmz) = \min\limits_{\bma \in \cA}\left(\widetilde c_n(\bms, \bmz, \bma)  + \gamma_c \mathbb{E}_{\bms',\bmz'} V_n^\ast(\bms', \bmz') \right),
\end{equation*}
where $\bms',\bmz'\sim \widetilde p(\cdot|\bms,\bmz, \bma)$. Furthermore, the optimal policy solving $\widetilde \cM_n$ has the representation $\bma \sim \pi_n^\ast(\cdot | \bms, \bmz)$;

b) the optimal value functions $V_n^\ast$ for $\widetilde\cM_n$ converge monotonically to $V_\infty^\ast$ --- the optimal value function for $\widetilde\cM_\infty$.
\end{thm}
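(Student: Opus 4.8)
For part (a), the plan is to recognize that for finite $n$ the Saute MDP $\widetilde\cM_n$ is an ordinary discounted MDP on the augmented state space $\widetilde\cS = \cS\times\cZ$, with bounded nonnegative cost $\widetilde c_n$ (bounded once $n<\infty$ by A1), compact action set (A2), and weakly continuous kernel (A3). I would define the Bellman operator $(T_n V)(\bms,\bmz) = \min_{\bma\in\cA}\bigl(\widetilde c_n(\bms,\bmz,\bma) + \gamma_c\,\E_{\bms',\bmz'} V(\bms',\bmz')\bigr)$ on the bounded lower semi-continuous functions under the sup-norm, and verify three things: (i) the minimum is attained and $T_n$ preserves bounded lower semi-continuity; (ii) $T_n$ is a $\gamma_c$-contraction; (iii) its fixed point is the optimal value function and yields an optimal stationary policy. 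For (i), the integrand $\bma\mapsto \widetilde c_n(\bms,\bmz,\bma)+\gamma_c\E V$ is lower semi-continuous in $\bma$: the cost term by A1, and the expectation term because weak continuity (A3) together with lower semi-continuity and boundedness of $V$ makes $\bma\mapsto\E_{\bms',\bmz'}V$ lower semi-continuous via a Portmanteau/Fatou argument for weakly converging kernels. Lower semi-continuity plus compactness (A2) gives attainment, and the Kuratowski--Ryll-Nardzewski measurable selection theorem produces a measurable minimizer $\pi_n^*(\cdot\mid\bms,\bmz)$.

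Point (ii) is the standard estimate $\|T_nV-T_nW\|_\infty\le\gamma_c\|V-W\|_\infty$ using $\gamma_c<1$, so Banach's theorem yields a unique fixed point $V_n^*$. A routine verification argument (comparing the discounted cost of an arbitrary history-dependent policy against $V_n^*$ and checking the stationary selector attains it) then identifies $V_n^*$ with the optimum of Problem~\ref{prob:saute_mdp} and shows $\bma\sim\pi_n^*(\cdot\mid\bms,\bmz)$ is optimal. This is all classical discounted-MDP theory (cf. Arapostathis et al.), so I would cite rather than reprove the selection and verification steps.

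For part (b), I would first record monotonicity: since $\widetilde c_n\uparrow\widetilde c_\infty$ pointwise, $T_nV\le T_{n+1}V$, hence the fixed points increase, $V_n^*\uparrow$, and I set $V^{\lim}\triangleq\sup_n V_n^*=\lim_n V_n^*$ in $[0,+\infty]$. The easy inequality $V^{\lim}\le V_\infty^*$ follows because for every policy $\pi$ the discounted costs satisfy $V_n(\pi)\le V_\infty(\pi)$, whence $V_n^*=\inf_\pi V_n(\pi)\le V_\infty^*$ for each $n$. The substance is the reverse inequality, which I would obtain by passing to the limit in the Bellman equation of part (a). Writing $f_n(\bma)=\widetilde c_n(\bms,\bmz,\bma)+\gamma_c\,\E_{\bms',\bmz'}V_n^*(\bms',\bmz')$, the monotone convergence theorem (costs are nonnegative, so the $V_n^*$ are nonnegative) gives $\E_{\bms',\bmz'}V_n^*\uparrow\E_{\bms',\bmz'}V^{\lim}$, so $f_n\uparrow f_\infty$ with $f_\infty(\bma)=\widetilde c_\infty(\bms,\bmz,\bma)+\gamma_c\,\E_{\bms',\bmz'}V^{\lim}$. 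The crucial step is the interchange $\lim_n\min_\bma f_n=\min_\bma f_\infty$. This is where A1--A2 earn their keep: for a monotone increasing sequence of lower semi-continuous functions on the compact set $\cA$, I would take minimizers $\bma_n$, extract a convergent subsequence $\bma_{n_k}\to\bma^*$, and use lower semi-continuity of each fixed $f_j$ together with $f_j\le f_{n_k}$ for $n_k\ge j$ to get $f_j(\bma^*)\le\lim_n\min_\bma f_n$ for every $j$; letting $j\to\infty$ yields $\min_\bma f_\infty\le\lim_n\min_\bma f_n$, and the reverse inequality is immediate from $f_n\le f_\infty$. Since $\min_\bma f_n=V_n^*(\bms,\bmz)$, this shows $V^{\lim}$ satisfies the Bellman equation for $\widetilde\cM_\infty$.

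The final and most delicate point is concluding $V^{\lim}=V_\infty^*$ from the fact that $V^{\lim}$ solves the limiting Bellman equation, and I expect this to be the main obstacle. The contraction argument of part (a) is unavailable, because $\widetilde c_\infty$ takes the value $+\infty$ and $T_\infty$ is not a sup-norm contraction, so the limiting Bellman equation need not have a unique solution. I would resolve this by invoking the theory of positive (monotone, nonnegative-cost) dynamic programming: with nonnegative lower semi-continuous costs, $\cA$ compact and $\cP$ weakly continuous, value iteration converges monotonically upward to the optimal value, which is the minimal nonnegative solution of the Bellman equation. Because $V^{\lim}$ is the monotone limit of the truncated value functions and satisfies the limiting equation, it coincides with this minimal solution, i.e. with $V_\infty^*$. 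The whole argument runs in the extended reals, so it also covers the case $V_\infty^*=+\infty$, which occurs precisely when no almost-surely safe policy exists from $(\bms,\bmz)$.
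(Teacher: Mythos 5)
Your proposal is correct, and in substance it reconstructs the very machinery that the paper invokes by citation: the paper's own proof (Appendix~\ref{app:theory}) consists of restating Hern\'andez-Lerma's results --- Theorems 4.2/4.6 for the Bellman equation and optimal stationary policies, Theorem 5.1 for monotone convergence of value functions under $c_n \uparrow c$ --- and then checking that $\widetilde\cM_n$ satisfies the hypotheses (bounded l.s.c.\ nonnegative costs, compact $\cA$ hence inf-compactness, weakly continuous kernel). Your part (a) (contraction on bounded l.s.c.\ functions, Portmanteau argument for l.s.c.\ of $\bma \mapsto \E\, V$, measurable selection, verification) is exactly what sits inside those cited theorems, so there is no real difference there; just note that you should also record that uniform limits of l.s.c.\ functions are l.s.c.\ (so your function class is complete) and that the minimum over compact $\cA$ of a jointly l.s.c.\ function is l.s.c.\ in $(\bms,\bmz)$, which is needed for $T_n$ to map the class into itself. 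Your part (b) is where the routes genuinely diverge: the paper never asserts that the limit value function satisfies the Bellman equation of $\widetilde\cM_\infty$ --- indeed it explicitly cautions that for the unbounded limit cost one ``cannot generally claim that $TV^\ast = V^\ast$'' --- whereas your argument passes through precisely that statement (min/limit interchange over compact $\cA$, then identification of $V^{\lim}$ with $V_\infty^\ast$ via the minimal-nonnegative-solution property of positive dynamic programming). This is legitimate here because A2 supplies the inf-compactness that such a claim requires, and your interchange argument (minimizers $\bma_n$, a convergent subsequence, l.s.c.\ of each fixed $f_j$, then $j\to\infty$) is sound; the one step you should not leave implicit is the minimality argument itself, which needs a measurable selector $\mu$ attaining the minimum in the extended-real-valued $T_\infty V^{\lim}$ and the chain $V^{\lim} \ge T_\mu^k V^{\lim} \ge T_\mu^k 0 \uparrow J_\mu \ge V_\infty^\ast$. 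In short: the citation approach buys brevity and avoids any claim about the limiting Bellman equation; your approach buys a self-contained proof and, as a by-product, establishes under A1--A3 the stronger fact that $V_\infty^\ast$ itself solves the Bellman equation of $\widetilde\cM_\infty$.
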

The practical implication of our theoretical result is three-fold: a) we can use critic-based methods and guarantee their convergence under standard assumptions, b) the optimal policy is Markovian and depends on the safety budget, i.e., $\bma \sim \pi_n^\ast(\cdot | \bms, \bmz)$, and c) vanilla RL methods can be applied to solve $\widetilde \cM_n$. We finally stress that we can solve $\widetilde \cM_\infty$ only approximately by solving $\widetilde \cM_n$ for a large enough $n$. 

\subsection{Almost Surely Safe Markov Decision Processes}
While we derived an algorithm and studied the theoretical properties of our problem, we are yet to discuss what problem we are aiming to solve. 
Consider the following formulation for safe reinforcement learning.
\begin{defn} An \emph{almost surely constrained MDP} is a c-MDP $\cM_c$ with the associated optimization problem:
\begin{subequations}
\begin{align}
        \min\limits_{\pi(\cdot| \bms_t, \bmz_t)}~&\mathbb{E} J_{\rm task},\\
        \text{s.t.: }&\bmz_t \ge 0~~a.s.,~\forall t\ge 0, \label{con:as_safety}\\
        & \bmz_{t+1} = (\bmz_t - l(\bms_t, \bma_t)) / \gamma_l,  \\
        & \bmz_0 = d,\notag
\end{align}\label{prob:as_constrained_mdp}
\end{subequations}
where \text{a.s.} stands for ``almost surely'' (with probability 1). 
\end{defn}
Solving this problem using RL should deliver almost surely safe policies benefiting safety-critical applications. This formulation is much stronger than the average and the CVaR constrained ones. Tackling Problem~\ref{prob:as_constrained_mdp} directly seems to be impossible at the first sight. Remarkably, solving Saut\'e MDP $\widetilde \cM_\infty$ is equivalent to solving Equation~\ref{prob:as_constrained_mdp}. The equivalence should be understood in the following sense:

\begin{thm} \label{thm:AlmostSureSafeRL}
Consider a Saut\'e MDP $\widetilde \cM_\infty$ and Equation~\ref{prob:saute_mdp}. Suppose there exists an optimal policy $\pi^\ast(\cdot| \bms_t, \bmz_t)$ solving Equation~\ref{prob:saute_mdp} with a finite cost, then $\pi^\ast(\cdot| \bms_t, \bmz_t)$ is an optimal policy for Equation~\ref{prob:as_constrained_mdp}.
\end{thm}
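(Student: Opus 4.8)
The plan is to reduce the equivalence to a single structural dichotomy: the cost of a policy $\pi(\cdot|\bms_t,\bmz_t)$ in the Saute MDP $\widetilde\cM_\infty$ equals $\mathbb{E} J_{\rm task}$ when the policy keeps the safety state nonnegative at all times almost surely (i.e. it satisfies~\eqref{con:as_safety}), and equals $+\infty$ otherwise. Granting this, the policies feasible for Problem~\ref{prob:as_constrained_mdp} are exactly the policies with finite Saute cost, and on them the two objectives coincide; hence minimizing the Saute cost and minimizing $\mathbb{E} J_{\rm task}$ subject to~\eqref{con:as_safety} are the same problem and must share their optimal policies.

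To establish the dichotomy, first observe that $\widetilde c=\widetilde c_\infty$ is nonnegative with values in $[0,+\infty]$, so the expected discounted sum in Problem~\ref{prob:saute_mdp} is well defined in $[0,+\infty]$ and I may use the convention $p\cdot(+\infty)=+\infty$ for $p>0$. If some time $t_0$ has $\P(\bmz_{t_0}<0)>0$, then on that positive-probability event the cost is $+\infty$, so the single nonnegative term $\gamma_c^{t_0}\,\mathbb{E}\,\widetilde c(\bms_{t_0},\bmz_{t_0},\bma_{t_0})=+\infty$ and the whole sum is $+\infty$. Equivalently, the Saute cost is finite only if $\P(\bmz_t<0)=0$ for every $t$; by countable subadditivity over $t\in\mathbb{N}$ this is the same as $\bmz_t\ge 0$ for all $t$ simultaneously almost surely, which is precisely~\eqref{con:as_safety}. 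Conversely, when~\eqref{con:as_safety} holds we have $\widetilde c(\bms_t,\bmz_t,\bma_t)=c(\bms_t,\bma_t)$ almost surely for every $t$, so the Saute objective collapses to $\mathbb{E}\sum_{t}\gamma_c^t c(\bms_t,\bma_t)=\mathbb{E} J_{\rm task}$ (finite whenever $c$ is bounded, e.g. under A1).

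The transfer of optimality then follows directly. By hypothesis $\pi^\ast$ has finite Saute cost, so by the dichotomy it satisfies~\eqref{con:as_safety} and is feasible for Problem~\ref{prob:as_constrained_mdp}, with Saute cost equal to $\mathbb{E}^{\pi^\ast}J_{\rm task}$. Take any policy $\pi$ feasible for Problem~\ref{prob:as_constrained_mdp}; by the dichotomy its Saute cost equals $\mathbb{E}^{\pi}J_{\rm task}$, and since $\pi^\ast$ minimizes the Saute objective over the policy class of Problem~\ref{prob:saute_mdp}, which contains every Markovian policy feasible for Problem~\ref{prob:as_constrained_mdp}, I obtain $\mathbb{E}^{\pi}J_{\rm task}\ge\mathbb{E}^{\pi^\ast}J_{\rm task}$. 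As $\pi$ was an arbitrary feasible policy, $\pi^\ast$ minimizes $\mathbb{E} J_{\rm task}$ subject to~\eqref{con:as_safety}, i.e. it is optimal for Problem~\ref{prob:as_constrained_mdp}.

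The only step demanding real care is the extended-real expectation argument above: I must take the expectation of the nonnegative extended-valued cost genuinely in $[0,+\infty]$, so that positive-probability violations are not silently discarded, and confirm measurability of the events $\{\bmz_{t_0}<0\}$. A secondary subtlety is that Problems~\ref{prob:saute_mdp} and~\ref{prob:as_constrained_mdp} must be compared over a common policy class; this holds because the Markovian policies admissible in Problem~\ref{prob:as_constrained_mdp} are a subset of those over which Problem~\ref{prob:saute_mdp} is minimized, and $\pi^\ast$ is by hypothesis of that Markovian form $\pi^\ast(\cdot|\bms_t,\bmz_t)$, so no loss of generality arises.
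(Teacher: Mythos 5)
Your proposal is correct and follows essentially the same route as the paper's own (much terser) proof: finite Saute cost forces almost-sure satisfaction of Constraint~\ref{con:as_safety}, and on the feasible set the Saute objective coincides with $\mathbb{E} J_{\rm task}$, so optimality transfers. Your version simply makes rigorous the steps the paper asserts without detail (the extended-real expectation argument, countable subadditivity over time steps, and the common policy class), which is a welcome tightening rather than a different approach.
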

\begin{proof}
The finite cost in $\widetilde \cM_\infty$ implies the satisfaction of Constraint~\ref{con:as_safety} almost surely. Now since the policy $\pi^\ast$ was obtained by minimizing the same objective as in Equation~\ref{prob:as_constrained_mdp} and satisfies Constraint~\ref{con:as_safety} almost surely, it also minimizes the objective in  Equation~\ref{prob:as_constrained_mdp}. 
\end{proof}

Above we showed that a policy solving Saut\'e MDP $\widetilde \cM_\infty$ and Equation~\ref{prob:saute_mdp} is actually policy solving Safe RL almost surely and Equation~\ref{prob:as_constrained_mdp}. We further showed that the optimal value function $V_n^\ast$ for $\widetilde \cM_n$ converges to the optimal value function $V_\infty^\ast$ for $\widetilde \cM_\infty$ under some mild conditions. While in many practical scenarios, this means that the policy for $\widetilde \cM_n$ for large $n$ approximates well the policy solving Safe RL almost surely, there may be some specific settings when this is not the case. Further assumptions can improve our analysis, however, we refer the reader to~\cite{hernandez1992discrete} for a detailed discussion.

\begin{figure}[ht]
     \centering
     \begin{subfigure}[b]{0.43\columnwidth}
     \centering
     \includegraphics[width=0.99\textwidth]{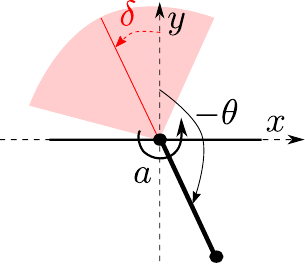} 
     \caption{Safe pendulum swing-up.}
     \label{mt_fig:safe_pendulum}
     \end{subfigure}~
     \begin{subfigure}[b]{0.43\columnwidth}     
     \centering
     \includegraphics[width=0.99\textwidth]{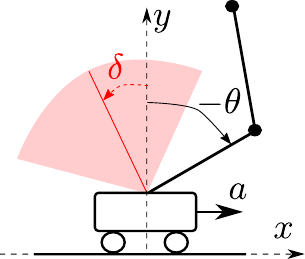} 
     \caption{Safe double pendulum}
     \label{mt_fig:safe_double_pendulum}
     \end{subfigure}
     \vspace{0.1mm}
     \begin{subfigure}[b]{0.43\columnwidth}     
     \centering
     \includegraphics[width=0.99\textwidth]{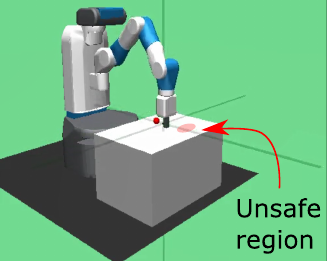} 
     \caption{Safe reacher}
     \label{mt_fig:safe_fetch_reacher}
     \end{subfigure}~
     \begin{subfigure}[b]{0.43\columnwidth}     
     \centering
     \includegraphics[width=0.99\textwidth]{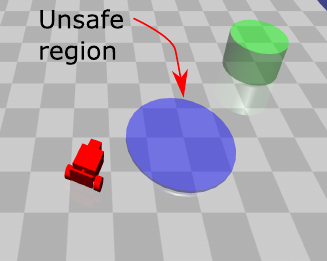} 
     \caption{Safety gym}
     \label{mt_fig:static_env_gym}
     \end{subfigure}
    \caption{Panels a and b: safe pendulum environments. In both cases, $\theta$ - is the angle from the upright position, $a$ is the action, $\delta$ - is the unsafe pendulum angle, the safety cost is the distance toward the unsafe pendulum angle, which is incurred only in the red area. Panel e: safe reacher:  robot needs to avoid the unsafe region (in red). Panel d: a safety gym environment: robot needs to reach the goal (in green) while avoiding the unsafe region (in blue).}
    \label{mt_fig:envs}
\end{figure}

\section{Experiments}
\textbf{Environments.} We demonstrate the advantages and the limitations of our approach on three OpenAI gym environments with safety constraints (pendulum swing-up, double pendulum balancing, reacher) and the OpenAI safety gym environment (schematically depicted in Figure~\ref{mt_fig:envs}). In the environment design we follow previous work by~\cite{kamthe2018data},~\cite{cowen2020samba},~\cite{yang2021wcsac} and delegate the details to Appendix~\ref{app:environments}.

\begin{figure*}[ht!]
    \centering
     \begin{subfigure}[b]{0.28\textwidth}
     \centering
     \includegraphics[width=0.99\textwidth]{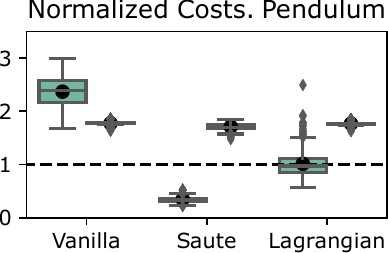} 
     \caption{PPO}
     \label{mt_fig:ppo_single}
     \end{subfigure}
     \begin{subfigure}[b]{0.28\textwidth}
     \centering
     \includegraphics[width=0.99\textwidth]{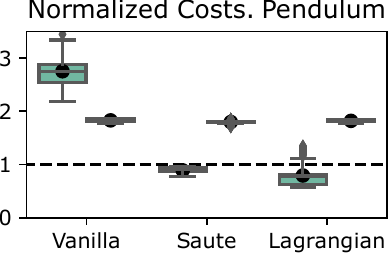}
     \caption{SAC}
     \label{mt_fig:sac_single}
     \end{subfigure}
     \begin{subfigure}[b]{0.39\textwidth}
     \centering
     \includegraphics[width=0.99\textwidth]{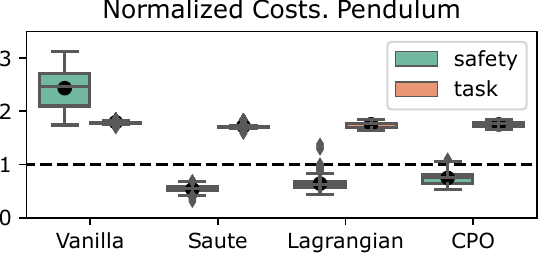} 
     \caption{TRPO}
     \label{mt_fig:trpo_single}
     \end{subfigure}
\caption{Saut\'e RL as a plug-n-play approach. 
Box plots for normalized safety(on the left) and task (on the right) costs for SAC, PPO and TRPO-type algorithms on pendulum swing-up environment with the safety budget $30$ after $300$ epochs of training. In all figures  the task cost are divided by $-100$ and the safety costs are divided by $30$, the dashed lined indicate the safety threshold. In all cases, ``saut\'eed'' algorithms deliver safe policies with probability 1 (outliers whiskers do not cross the dashed line), while Lagrangian methods and CPO have trajectories violating the safety constraints. For task costs the higher values are better.}
    \label{mt_fig:single_pendulum}
\end{figure*}
\begin{figure*}[ht]
     \centering
     \begin{subfigure}[b]{0.39\textwidth}
     \centering
     \includegraphics[width=0.99\textwidth]{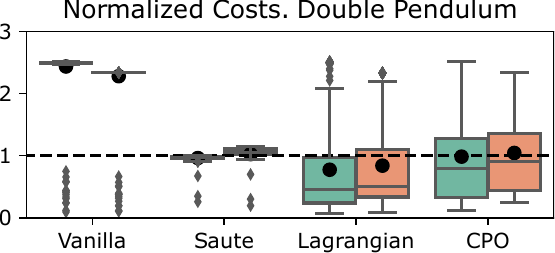}
     \caption{TRPO}
     \label{mt_fig:double_pendulum}
     \end{subfigure}          
     \begin{subfigure}[b]{0.22\textwidth}
     \centering
     \includegraphics[width=0.99\textwidth]{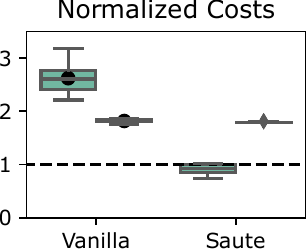}
     \caption{MBPO}
     \label{mt_fig:mbpo}
     \end{subfigure}     
     \begin{subfigure}[b]{0.22\textwidth}
     \centering
     \includegraphics[width=0.99\textwidth]{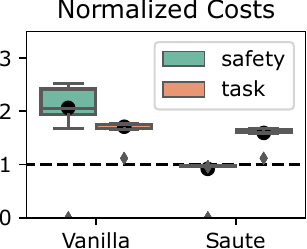}
     \caption{PETS}
     \label{mt_fig:pets}
     \end{subfigure}
\caption{Saut\'e TRPO on the double pendulum environment (panel a) and Saut\'e MBRL on the pendulum swing-up environment (Panels b and c). Box plots for safety costs (on the left) and normalized task (on the right). Panel a: The task costs are divided by $-80$, while the safety costs by $40$ with the safety budget $40$. Panels b and c: The task costs are divided by $100$, while the safety costs by $30$ with $30$ being the safety budget. In all plots dashed lines indicate the safety threshold. For task costs the higher values are better.}
    \label{mt_fig:res}
\end{figure*}

\textbf{Implementation.} The main benefit of our approach to safe RL is the ability to ``saut\'e'' \emph{any RL algorithm}. This is because we do not need to change the algorithm itself (besides some cosmetic changes), but create a wrapper around the environment. The implementation is quite straightforward and the only ``trick'' we used is normalizing the safety state by dividing it by the safety budget:
\begin{equation*}
\begin{aligned}
    \bmz_{t+1} &= (\bmz_t - l(\bms_t, \bma_t) / d) / \gamma_l, \\
    \bmz_0 &= 1.
\end{aligned}
\end{equation*}
Hence the variable $\bmz_t$ is always between zero and one. Note this does not affect our theoretical results. The reset and step functions have to be overloaded to augment the safety state and shape the cost as in Equation~\ref{eq:bounded_augmented_reward}. More details on ``saut\'eed'' environment implementation are available in Appendix~\ref{app:detailed_implementation_details}. We used safety starter agents~\cite{raybenchmarking} as the core implementation for model-free methods, their Lagrangian versions (PPO, TRPO, SAC), and CPO. We use the hyper-parameters listed in Appendix~\ref{app:default_hyperparameters}. For our model-based implementations, we used~\cite{Pineda2021MBRL} as the core library, which has PyTorch implementation of MBPO~\cite{janner2019trust}, and PETS~\cite{chua2018deep}. Finally, we implemented a CVaR constrained safe RL based on the safety starter agents. We discuss our implementation in Appendix~\ref{app:cvar}. Our implementations are available online~\cite{sootla_saute_2022_git}. 

\textbf{Evaluation protocols.} In all our experiments we used $5$ different seeds, we save the intermediate policies and evaluate them on $100$ different trajectories in all our figures and tables. One exception is the evaluation of PETS, for which we used $25$ trajectories. Note that in all the plots we use returns based on the original task costs $c$, not the reshaped task costs $\widetilde c_n$ to evaluate the performance. In all our experiments we set the safety discount factor for Saut\'e RL equal to one, while the safety discount factor for other algorithms varies. We also use box-and-whisker plots with boxes showing median, $q_3$ and $q_1$ quartiles of the distributions ($75$th and $25$th percentiles, respectively), whiskers depicting the error bounds computed as $1.5 (q_3 - q_1)$, as well as outliers, e.g., points lying outside the whisker intervals~ \cite{Waskom2021}. We add black dots to the plots which signify the mean. We use box-and-whisker plots so that we can showcase the outliers and the percentiles, which are important criteria for the evaluation of almost surely constraints.

\textbf{Saut\'e RL is an effective plug-n-play approach.} We first demonstrate that Saut\'e RL can be easily applied to both on-policy  (PPO, TRPO) and off-policy algorithms (SAC) without significant issues. We run all these algorithms on the pendulum swing-up environment. We test the policies with the initial state sampled around the downright position of the pendulum. The results in Figure~\ref{mt_fig:single_pendulum} indicate that PPO, TRPO, and SAC can be effectively ``saut\'eed'' and deliver policies safe almost surely (i.e., with probability one and all trajectories satisfy the constraint). Note that the difference in behavior for Trust Region-based algorithms is the smallest, while Saut\'e SAC delivers the best overall performance. We also present the evaluation during training in Figures~\ref{fig:saute_RL_single},~\ref{fig:trpo_single_pendulum},~\ref{fig:ppo_single_pendulum} and~\ref{fig:sac_single_pendulum} in Appendix.

\textbf{Saut\'e Model-Based RL.} We proceed by ``saut\'eing'' MBRL methods: MBPO and PETS. As the results in Figures~\ref{mt_fig:mbpo}, and~\ref{mt_fig:pets} suggest, we lose some performance, but guarantee safety in both cases. Remarkably we could ``saut\'e'' both MPC and policy-based methods without significant issues.

As we have demonstrated the plug-n-play nature of our approach for model-free and model-based methods, in all further experiments we evaluate our method on Trust Region-based algorithms only, i.e., Vanilla TRPO, its variants, and CPO. We did so because Saut\'e TRPO has a lower gap in performance with Lagrangian TRPO than Saut\'e SAC and Saut\'e PPO have with their Lagrangian versions. However, a fair evaluation against CPO was also appealing.

\textbf{``Safety on average'' can be very unsafe even in deterministic environments.} While safety on average is less restrictive than safety almost surely, in some situations safety on average can lead to unwanted behaviors. We design the safe double pendulum environment in such a way that task and safety costs are correlated. Hence restricting the safety cost leads to restricting the task cost and forces the Lagrangian algorithm to balance a trade-off between these objectives. Further, the constraints on ``average'' allow Lagrangian TRPO and CPO to learn the policies that prioritize minimizing task costs for some initial states and minimizing safety costs for other initial states. While the constraint is satisfied on average, the distributions of task and safety costs for both CPO and Lagrangian TRPO have a large variance (see Figure~\ref{mt_fig:double_pendulum}). Further, some outliers have similar behavior to the Vanilla TRPO. Saut\'e TRPO on the other hand achieves the best overall performance. We plot the evaluation curves during training in Appendix in Figure~\ref{fig:trpo_double_pendulum}.

\begin{figure}[ht]
     \centering
     \begin{subfigure}[b]{0.85\columnwidth}
     \centering
     \includegraphics[width=0.99\textwidth]{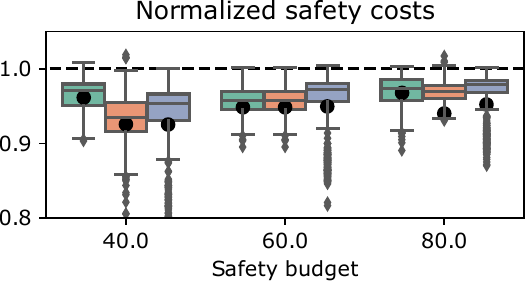}
     \caption{Safety costs}
     \label{mt_fig:generalizing_safety_cost}
     \end{subfigure}
     \begin{subfigure}[b]{0.85\columnwidth}
     \centering
     \includegraphics[width=0.99\textwidth]{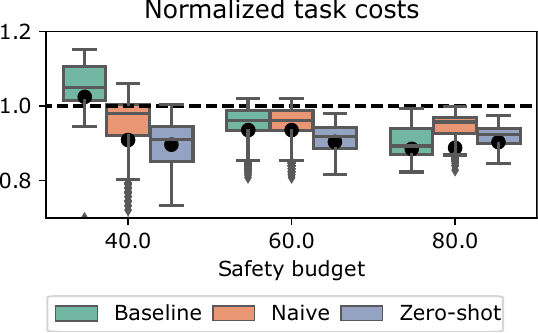}
     \caption{Task costs}
     \label{mt_fig:ggeneralizing_task_cost}
     \end{subfigure}          
\caption{The normalized task and safety costs for generalization across safety budgets. The task costs are divided by $-2 d$, while the safety costs by $d$ with $d$ being the safety budget, hence dashed lines indicate the safety threshold. The \emph{baseline} policies are trained and evaluated on the same safety budges; the \emph{na\"ive} approach trains policies on the safety budget $60$ and the \emph{zero-shot} approach trains policy by sampling the safety budget from the interval $[5,100]$. For task costs the higher values are better.}        
    \label{mt_fig:generalizing}
\end{figure}
\textbf{Generalization across safety budgets.} Since the safety budget $d$ enters the problem formulation as the initial value of the safety state, we can generalize to a different safety budget \emph{after training} by changing the initial safety state. We train three separate set of policies for safety budgets $40$, $60$, $80$, we then take the policies trained for the safety budget $60$ and evaluate them on the safety budgets $40$ and $80$. The test results of this \emph{na\"ive} generalization approach over 5 different seeds are depicted in Figure~\ref{mt_fig:generalizing} showing that the na\"ive generalization approach has a similar performance with policies explicitly trained for budgets $40$ and $80$. We further train another set of policies with the initial safety state uniformly sampled from the interval $[5, 100]$. When it comes to safety constraint satisfaction this \emph{zero-shot} approach outperforms the na\"ive approach, which has some outlier trajectories for safety budgets $40$ and $80$ (see Figure~\ref{mt_fig:generalizing}). 

\begin{figure}[t]
     \centering
 \begin{subfigure}[t]{0.8\columnwidth}
     \centering
      \includegraphics[width=0.99\columnwidth]{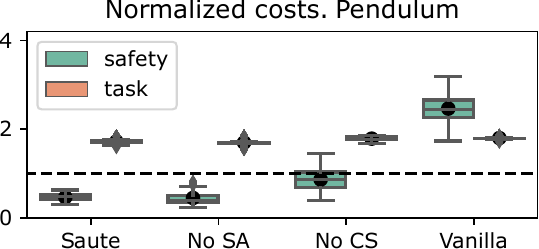}     \caption{Ablation on components. Pendulum Swing-up}
     \label{mt_fig:components_single}
 \end{subfigure}
 
     \vskip 2mm
     
      \begin{subfigure}[t]{0.8\columnwidth}
     \centering
     \includegraphics[width=0.99\columnwidth]{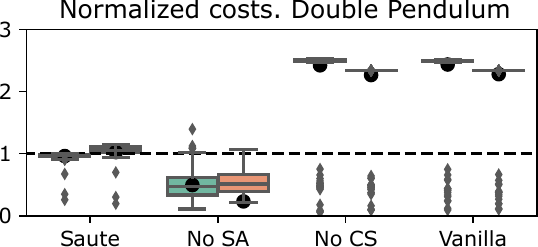}
     \caption{Ablation on components. Double Pendulum}
     \label{mt_fig:components_double}
     \end{subfigure}
     
     \vskip 2mm

     \begin{subfigure}[t]{0.95\columnwidth}
     \centering
     \includegraphics[width=0.99\columnwidth]{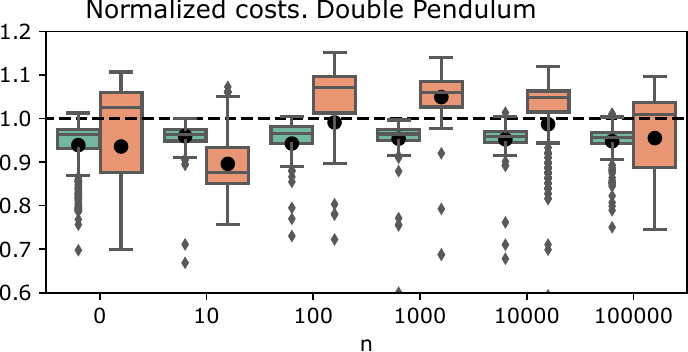}
     \caption{Task cost shaping}
     \label{mt_fig:reward_shaping}
     \end{subfigure}
\caption{Normalized safety (on the left) and  task (on the right) costs in ablation studies for Saut\'e TRPO. Panels a and b: ablation on components for double pendulum (Panel a) and pendulum swing-up (Panel b). ``No SA'' stands for no state augmentation, ``No CS'' stands for no cost shaping. Panel c: varying values $n$ in reshaped costs $\widetilde c_n$ for the safety budget $d = 40$. The numerical values can be found in Table~\ref{tab:reward_shaping} in Appendix. In all plots the task costs are divided by $-80$ (i.e., higher values are better), while the safety costs by $40$, dashed lines indicate the safety threshold.}          \label{mt_fig:ablation_cost}
\end{figure}
\textbf{Ablation.} Since we have only two main components (cost shaping and state augmentation) our ablation study is rather straightforward. We perform evaluations on the double pendulum environment with the safety budget set to $40$. According to the results in Figure~\ref{mt_fig:components_double} removing cost shaping produce results similar to Vanilla TRPO, which is expected. Removing state augmentation leads to a significant deterioration in task cost minimization and the safety costs are much lower. It appears that the trained policy hedges its bets by not using the whole safety budget, which leads to the task cost increase. 
Interestingly, in the pendulum swing-up case, removing the state augmentation is not catastrophic, see Figure~\ref{mt_fig:components_single}. This is because we evaluate the policy while sampling the initial states near the same initial position. Hence the safety states are similar at every time for different trajectories. This allows the algorithm without state augmentation to still produce competitive results. This experiment shows that the effect of state augmentation may be easily overlooked.

All the parameters in a ``saut\'eed'' algorithm are the same as in its ``vanilla'' version except for the parameter $n$ in the cost $\widetilde c_n$. Hence we only need to perform the second ablation with respect to $n$. In all our previous experiments with the double pendulum, we set $n=200$, and here we test different values $n$ for the safety budget $40$ and present the evaluation results after $600$ epochs of training in Figure~\ref{mt_fig:reward_shaping}. Increasing $n$ from $0$ to $100$ improves the performance of Saut\'e RL by decreasing the number of outliers in safety cost distributions. However, increasing the value of $n$ to $10000$ and $100000$ leads to additional outliers in safety and task cost distributions. We attribute this to numerical issues as the task costs $c$ take values between zero and one, and the large value of $n$ can lead to numerical issues in training. We note, however, that the differences between the numerical values are not large, which can be verified in Table~\ref{tab:reward_shaping} in Appendix.

\begin{figure}[t]
     \centering
      \begin{subfigure}[b]{0.99\columnwidth}
     \centering
     \includegraphics[width=0.99\columnwidth]{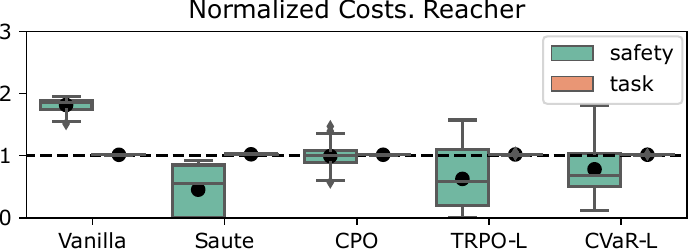}
     \caption{Reacher}
     \label{mt_fig:reacher}
     \end{subfigure}
     
     \vskip 2mm
     
      \begin{subfigure}[b]{0.99\columnwidth}
     \centering
     \includegraphics[width=0.99\columnwidth]{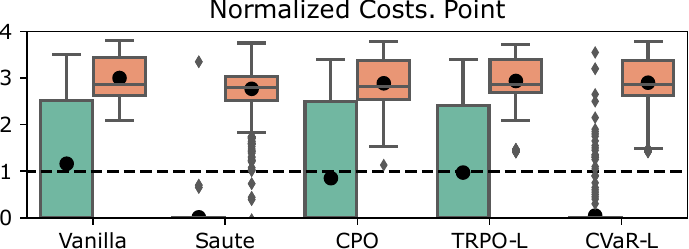}
     \caption{Safety gym. Point}
     \label{mt_fig:safety_gym_point}
     \end{subfigure}
          
     \vskip 2mm

  \begin{subfigure}[b]{0.99\columnwidth}
     \centering
     \includegraphics[width=0.99\columnwidth]{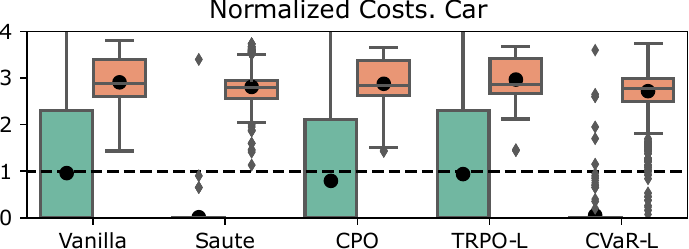}
  \caption{Safety gym. Car}
     \label{mt_fig:safety_gym_car}
     \end{subfigure}
\caption{Normalized safety (on the left) and task (on the right) costs in Reacher and Safety Gym. Panel a: Results for the reacher environment with the safety budget $10$, where the task costs are divided by $60$, while the safety costs by $10$. Panel b: Results for the safety gym point environment with the safety budget $20$, where the task costs are divided by $-1$ and the safety costs are divided by $20$. Dashed lines indicate the safety threshold. For task costs the higher values are better.} \label{mt_fig:boxplots}
\end{figure}
\textbf{Further Experiments}. We then test Saut\'e TRPO on Reacher Environment as well as a Safety Gym environment with Car and Point robots. Results in Figure~\ref{mt_fig:reacher} are quite similar to the results on the pendulum swing-up and double pendulum environments, where both TRPO Lagrangian (TRPO-L) and CPO delivered the policies safe on average, but not safe almost surely as Saut\'e TRPO. We also compared our implementation of the CVaR constrained problem with $\alpha=0.01$, which is denoted as CVaR-L and is an approximation of almost surely safety constraints (see Appendix~\ref{app:cvar}). 
While the reacher task is closer in nature to pendulums, the safety gym is quite a complicated environment. Every environment in the safety gym has dozens of states ($46$ for the point environment and $56$ for the car environment) due to the inclusion of LIDAR measurements. Finally, the instantaneous task costs are shaped so that their values are close to zero, which is tailored for TRPO and PPO-like algorithms. This makes our approach a bit harder to use since we reshape the costs. Nevertheless, the results in Figure~\ref{mt_fig:safety_gym_point} indicate that Saut\'e RL delivers a safe set of policies with only a few outliers violating the constraints. Most of the trajectories have the same returns as TRPO Lagrangian, but the average task cost is brought down by a few outliers. It appears that in these outlier trajectories the ``saut\'eed'' policies prioritize safety over task costs. While TRPO-L and CPO produce on average better task costs, the safety constraints are being violated on a rather regular basis. Note that in our experiments the safety budget is chosen to be the average incurred cost by Vanilla TRPO and neither TRPO-L nor CPO decreases the variance of the safety cost. While TRPO-L and CPO lower the average cost, the number of outlier trajectories remains quite high. Finally, CVaR-L fails for the Reacher environment but appears to be a somewhat competitive approximation for Saut\'e RL for the Point and Car environments. Our results suggest, however, that approximating almost surely constraints with CVaR constraints can still result in a significant number of outliers. 

We further compared our method to a more advanced baseline where a PID controller is used to update the Lagrangian multiplier~\cite{stooke2020responsive}. Our results suggest that PID Lagrangian offers some level of improvement in training (especially stability of the training process). However, the end performance is similar to TRPO-L in our environments  (see results in Appendix~\ref{app:pidl-comparison}). This is because we consider a completely different problem formulation to PID Lagrangian and Lagrangian approaches. 

\section{Conclusion}
We presented an approach to safe RL using state augmentation, which we dubbed Saut\'e RL. The key difference of our approach is that we ensure the safety constraints almost surely (with probability one), which is desirable in many applications. Even, in deterministic environments having an ``average'' constraint can lead to unwanted effects, when the safety cost is high for some initial states and is low for other initial states at the same time. Our approach deals with this case by ensuring that the same constraint is satisfied for all initial states. We showed that state augmentation is essential in some environments for optimality, which supports our theoretical results implying that the optimal policy depends on the safety state. The constraint satisfaction almost surely is a very strong criterion and in some applications, it can be too restrictive. This is, however, a design choice and application dependent, now let us discuss specific pros and cons.

\textbf{Advantages.} Saut\'e RL has a plug-n-play nature, which allows for straightforward extensions as we demonstrated by saut\'eing PPO, TRPO, and SAC. Furthermore, we used saut\'eed environments in the model-based RL setting (MBPO and PETS) as well. We showed that Saut\'e RL generalizes across safety budgets and can learn safe policies for all safety budgets simultaneously. This feature is enabled by the architecture of our state augmentation. Since the remaining safety budget is the initial state, we can randomly sample different safety budgets at the beginning of the episode. At the test time, the safety budget could be set in a deterministic fashion to evaluate specific Safe RL problems.

\textbf{Limitations.} We have not treated the case with multiple constraints, which can bring some difficulties for cost re-shaping. 
Further, ``saut\'eing'' an MDP increases the state-space by the number of constraints. Therefore, the dimension of value functions and policy grows and potentially can lead to scalability issues. While this is a common issue in constrained problems, using a Lagrangian approach can be more sample efficient. Since the theoretical sample efficiency estimates usually depend on the number of states~\cite{mania2018simple}, it would be interesting to find means to counteract this loss of efficiency. This potentially can be done in the setting where the safety cost function is given, by exploiting the known safety state transitions.

Saut\'e RL does not currently address the problem of constraint violation during training, which is still a major problem in safe RL. However, the combination of Saut\'e RL and methods for addressing such a problem could be an interesting direction for future work as well. After all, the cost-to-go in Saut\'e RL does incorporate potential safety violations and this information can potentially be used for safe training. 

\textbf{Future Work.} Besides addressing the limitations, it would be interesting to extend our approach to model-based algorithms applicable to high-dimensional environments including PlaNet~\cite{hafner2019dream}, Dreamer~\cite{hafner2019dream}, Stochastic Latent Actor Critic~\cite{lee2020stochastic} etc. Furthermore, it would be interesting to evaluate the effect of safety state augmentation on average and CVaR constrained problems.

\bibliography{safe_rl}
\bibliographystyle{icml2022}

\setcounter{equation}{0}
\setcounter{thm}{0}
\setcounter{defn}{0}
\setcounter{figure}{0}
\setcounter{table}{0}
\setcounter{section}{0}

\renewcommand{\thethm}{A\arabic{thm}}
\renewcommand{\thedefn}{A\arabic{defn}}
\renewcommand{\theprop}{A\arabic{prop}}
\renewcommand{\thelem}{A\arabic{lem}}
\renewcommand{\thesection}{A\arabic{section}}

\renewcommand{\theequation}{A\arabic{equation}}
\renewcommand{\thetable}{A\arabic{table}}
\renewcommand{\thefigure}{A\arabic{figure}}
\renewcommand\thesubfigure{(\alph{subfigure})}

\appendix 

\section{Theoretical analysis}\label{app:theory}
Proof of Theorem~\ref{thm:optimal_policy} follows form the results by~\cite{hernandez1992discrete} and~\cite{hernandez2012discrete}, which we reproduce and condense for readers' convenience. We cover the conditions for the existence of the Bellman equation and optimal policies in Appendix~\ref{app:bellman}, and we discuss the convergence of a sequence of MDPs to a limit MDP in Appendix~\ref{app:sequence_bellman}
and discuss the application of these results to our case in Appendix~\ref{app:saute_discussion}.

\subsection{MDPs, Optimality and Bellman equation} \label{app:bellman}
 Consider an MDP $\cM = \{\cS, \cA, \cP, c, \gamma_c\}$ with an action set defined for every state $\bma \in \cA(\bms)$, where $\cA$ are non-empty sets. The set 
\begin{equation*}
    \K = \{(\bms, \bma) | \bms \in \cS, \bma \in \cA(\bms)\}
\end{equation*}
of admissible state-action pairs is assumed to be a Borel subset of $\cS \times \cA$. We will need the following definitions: 
\begin{defn}
A function $u$ is \emph{inf-compact} on $\K$ if the set $\{\bma \in \cA(\bms) | u(\bms, \bma) \le r\}$ is compact for every $\bms\in\cS$ and $r\in\R$.

A function $u$ is \emph{lower semi-continuous (l.s.c.)} in $\cS$ if for every $\bms_0 \in\cS$ we have $\liminf\limits_{\bms\rightarrow\bms_0} u(\bms) \ge \bms_0$.

A set-valued function $\bms \rightarrow \cA(\bms)$ is \emph{lower semi-continuous (l.s.c.)}, if for any $\bms_n \rightarrow \bms$ in $\cS$ and $\bma\in \cA(\bms)$, there are $\bma_n\in\cA(\bms_n)$ such that $\bma_n\rightarrow \bma$.

A distribution $\cQ(\bmy | \bms, \bma)$ is called \emph{weakly continuous}, if for any continuous and bounded function $u$ on $\cS$ the map $(\bms, \bma) \rightarrow \int_{\cS}u(\bmy) \cQ(d\bmy| \bms, \bma)$ is continuous on $\K$.
\end{defn}

Let the value functions be denoted as follows:
\begin{align*}
    V(\pi, \bms_0) &= \E_{\bms}^\pi \sum\limits_{t=0}^\infty \gamma_c ^t c(\bms_t, \bma_t),\\
    V^\ast( \bms) &\triangleq \inf\limits_\pi V(\pi, \bms),
\end{align*}
where $\E^\pi_\bms$ stands for the average with action sampled according to the policy $\pi$ and the transitions $\cP$. We also define the Bellman operator:
\begin{equation*}
        T v(\bms) = \min\limits_{\bma \in \cA(\bms)} \left[c(\bms, \bma) + \gamma \int v(\bmy) \cP(d\bmy| \bms, \bma)\right],
\end{equation*}
acting on value functions. We also make the following assumptions:
\begin{itemize}
    \item[B1.] The function $c(\bms, \bma)$ is bounded, measurable on $\K$, nonnegative, lower semi-continuous and inf-compact on $\K$;
    \item[B2.] The transition law $\cP$ is weakly continuous;
    \item[B3.] The set valued map $\bms \rightarrow \cA(\bms)$ is lower semi-continuous;

\end{itemize}
We summarize Theorems 4.2 and~4.6 by~\cite{hernandez1992discrete} in the following result:
\begin{prop}
Suppose an MDP $\cM =  \{\cS, \cA, \cP, c, \gamma_c\}$ satisfies Assumptions B1-B3. Then: 

a) The optimal cost function $V^\ast$ satisfies the Bellman equation, i.e., $T V^\ast = V^\ast$ (Theorem 4.2);

b) The policy $\pi^\ast$ is optimal (i.e., $V(\pi^\ast, \cdot) = V^\ast(\cdot)$) if and only if $V(\pi^\ast, \cdot) = T V(\pi^\ast, \cdot)$ (Theorems 4.2 and 4.6).
\end{prop}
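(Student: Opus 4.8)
The statement is the classical dynamic-programming result for discounted-cost MDPs, so the plan is a contraction/fixed-point argument for the Bellman operator $T$ reinforced by a measurable-selection step. Since $c$ is bounded (B1) and $\gamma_c\in(0,1)$, the natural arena is the space $\cL$ of bounded lower semi-continuous (l.s.c.) functions on $\cS$, equipped with the sup-norm; as uniform limits of l.s.c. functions are l.s.c., $\cL$ is a closed subspace of a Banach space and hence complete.

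For part (a) I would first verify that $T$ maps $\cL$ into itself. The key sub-step is that, under the weak continuity of $\cP$ (B2), the map $(\bms,\bma)\mapsto\int v(\bmy)\,\cP(d\bmy|\bms,\bma)$ is l.s.c. for every bounded l.s.c. $v$: on a metric space such a $v$ is an increasing pointwise limit of bounded continuous functions, and a monotone-convergence argument transfers l.s.c. through the integral. Combined with B1, the integrand $\bma\mapsto c(\bms,\bma)+\gamma_c\int v\,d\cP$ is l.s.c. and inf-compact on $\K$; minimizing such a function over the l.s.c. correspondence $\bms\mapsto\cA(\bms)$ (B3) returns a bounded l.s.c. function and, crucially, attains its minimum. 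Thus $T:\cL\to\cL$, and because costs are bounded $T$ is a monotone $\gamma_c$-contraction in sup-norm, so Banach's fixed-point theorem yields a unique $v^\ast\in\cL$ with $Tv^\ast=v^\ast$. I would then identify $v^\ast$ with $V^\ast$: the lower bound $v^\ast\le V(\pi,\cdot)$ holds for every policy $\pi$ (iterating $v^\ast=Tv^\ast$, with bounded costs and $\gamma_c^k\to 0$ killing the horizon tail), while the stationary policy assembled from the minimizers attains $V(\pi^\ast,\cdot)=v^\ast$, giving the reverse bound.

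The main obstacle is exactly this attainment/selection point: I need not only that the infimum in $T$ is a minimum for each fixed $\bms$, but that the minimizers can be chosen as a measurable stationary map $f(\bms)\in\argmin_{\bma\in\cA(\bms)}\big(c(\bms,\bma)+\gamma_c\int v^\ast\,d\cP\big)$. This is precisely where the topological hypotheses B1--B3 are spent, via a measurable-selection theorem for inf-compact l.s.c. functions; the selector defines the candidate optimal stationary Markov policy $\pi^\ast$. Everything else (completeness, contraction, preservation of l.s.c.) is routine once this is in place.

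For part (b) I would work with the policy-evaluation operator $T_\pi v(\bms)=c(\bms,\pi(\bms))+\gamma_c\int v(\bmy)\,\cP(d\bmy|\bms,\pi(\bms))$ for a stationary $\pi$, noting that $V(\pi,\cdot)$ is its unique fixed point and that $T_\pi v\ge Tv$ pointwise. For the ``only if'' direction, optimality of $\pi^\ast$ gives $V(\pi^\ast,\cdot)=V^\ast$, and part (a) gives $V^\ast=TV^\ast$, so $V(\pi^\ast,\cdot)=TV(\pi^\ast,\cdot)$. For the ``if'' direction, the hypothesis $V(\pi^\ast,\cdot)=TV(\pi^\ast,\cdot)$ together with $V(\pi^\ast,\cdot)=T_{\pi^\ast}V(\pi^\ast,\cdot)$ shows that $V(\pi^\ast,\cdot)$ is a fixed point of $T$; uniqueness of the fixed point from part (a) then forces $V(\pi^\ast,\cdot)=V^\ast$, i.e., $\pi^\ast$ is optimal.
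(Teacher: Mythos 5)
Your proposal is correct in outline, but it takes a genuinely different route from the paper: the paper does not prove this proposition at all — it is stated as a condensed summary of Theorems 4.2 and 4.6 of Hern\'andez-Lerma (1992), with only the added remark that the feasibility hypothesis of that reference (existence of a policy with finite value) follows from the boundedness of $c$ in Assumption B1. Moreover, the proofs in that cited source are built to handle \emph{unbounded} costs, so they proceed by value iteration and monotone-convergence arguments combined with a measurable-selection lemma, not by a sup-norm contraction. Your argument instead spends the boundedness in B1 to run Banach's fixed-point theorem on the complete metric space of bounded l.s.c. functions (a closed subset, not a linear subspace, of the bounded functions — minor wording slip), which buys two things the cited route does not give directly: a self-contained proof and, crucially, \emph{uniqueness} of the bounded fixed point of $T$, which makes the ``if'' direction of part (b) a one-line consequence. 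Two small points to tighten: first, part (b) is stated for arbitrary (possibly history-dependent) policies, so your uniqueness step should note that it only needs $V(\pi^\ast,\cdot)$ bounded and measurable — the pointwise contraction inequality $\lvert Tu - Tw\rvert \le \gamma_c \lVert u - w\rVert_\infty$ does not require membership in the l.s.c. class, and your parenthetical appeal to $T_{\pi^\ast}$ (whose fixed-point characterization of $V(\pi^\ast,\cdot)$ holds only for stationary Markov policies) is superfluous there; second, the step ``minimizing an l.s.c.\ inf-compact integrand over the l.s.c.\ correspondence $\bms \mapsto \cA(\bms)$ yields an l.s.c.\ value and a measurable selector'' is the real technical core (it is exactly the selection lemma the cited reference relies on), so labelling everything else as routine is fair, but this lemma itself deserves a citation rather than the word ``routine''. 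The trade-off between the two approaches matters later in the paper: your contraction argument cannot be reused for the limit MDP $\widetilde \cM_\infty$ with unbounded cost (where the paper itself notes $TV^\ast = V^\ast$ may fail), whereas the cited machinery is designed for precisely that regime.
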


\cite{hernandez1992discrete} proved these results under milder conditions on the cost function than the boundedness condition we use. However, \cite{hernandez1992discrete} also had an assumption on the existence of a feasible policy, i.e., they assumed that there exists a policy $\widehat \pi$ such that $V(\widehat \pi, \bms) < \infty$ for each $\bms \in \cS$. This, however, follows from the boundedness of the cost function. 

\subsection{Limit of a sequence of MDPs} \label{app:sequence_bellman}
Consider now a sequence of MDPs $\cM_n = \{\cS, \cA, \cP, c_n, \gamma_c\}$, where without loss of generality we will write $c \triangleq c_\infty$ and $\cM \triangleq \cM_\infty$. Consider now a sequence of value functions $\{V_n^\ast\}_{n=0}^\infty$:
\begin{align*}
    V_n(\pi, \bms_0) &= \E_{\bms}^\pi \sum\limits_{t=0}^\infty \gamma_c ^t c_n(\bms_t, \bma_t),\\ 
    V_n^\ast(\bms) &\triangleq \inf\limits_\pi V_n(\pi, \bms).
\end{align*}
The ``limit'' value functions (with $n = \infty$) we still denote as follows:
\begin{align*}
    V(\pi, \bms_0) &= \E_{\bms}^\pi \sum\limits_{t=0}^\infty \gamma_c ^t c(\bms_t, \bma_t),\\
    V^\ast( \bms) &\triangleq \inf\limits_\pi V(\pi, \bms).
\end{align*}

We also define the sequence of Bellman operators 
\begin{align*}
    T_n v(\bms) &= \min\limits_{\bma \in \cA(\bms)} \left[c_n(\bms, \bma) + \gamma \int v(\bmy) \cP(d\bmy| \bms, \bma)\right],\\
    T v(\bms) &= \min\limits_{\bma \in \cA(\bms)} \left[c(\bms, \bma) + \gamma \int v(\bmy) \cP(d\bmy| \bms, \bma)\right].
\end{align*}
In addition to the previous assumptions, we make an additional one, while modifying Assumption B1:
\begin{itemize}
    \item[B1'.] For each $n$ the functions $c_n(\bms, \bma)$ are bounded, measurable on $\K$, nonnegative, lower semi-continuous and inf-compact on $\K$;
    \item[B4.] The sequence $\{c_n(\bms, \bma)\}_{n=0}^\infty$ is such that $c^n \uparrow c$;
\end{itemize}

We reproduce Theorem 5.1 by~\cite{hernandez1992discrete} in the following proposition:
\begin{prop}
Suppose MDPs $\cM_n =  \{\cS, \cA, \cP, c_n, \gamma_c\}$ satisfy Assumptions B1', B2 - B4, then the sequence $\{V_n^\ast\}$ is monotonically increasing and converges to $V^\ast$.
\end{prop}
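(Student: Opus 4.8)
The plan is to establish the two assertions in turn: the monotonicity together with an a priori upper bound (routine), and then the convergence (the substantive part). For monotonicity, I would use Assumption B4, which gives $c_n(\bms,\bma) \le c_{n+1}(\bms,\bma) \le c(\bms,\bma)$ pointwise, so that for every fixed policy $\pi$ the discounted sums satisfy $V_n(\pi,\bms) \le V_{n+1}(\pi,\bms) \le V(\pi,\bms)$. Taking the infimum over $\pi$ preserves these inequalities and yields $V_n^\ast \le V_{n+1}^\ast \le V^\ast$. Hence $\{V_n^\ast\}$ is increasing and bounded above by $V^\ast$, so the pointwise limit $\bar V \triangleq \sup_n V_n^\ast$ exists and satisfies $\bar V \le V^\ast$. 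Everything then reduces to proving the reverse inequality $\bar V \ge V^\ast$.

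My strategy is to show that $\bar V$ is a fixed point of the limiting Bellman operator $T$ and then identify it with $V^\ast$. Since each $\cM_n$ satisfies B1 (through B1'), B2 and B3, the preceding Proposition gives the Bellman equation $V_n^\ast = T_n V_n^\ast$; writing $g_n(\bms,\bma) \triangleq c_n(\bms,\bma) + \gamma_c \int V_n^\ast(\bmy)\,\cP(d\bmy|\bms,\bma)$ this reads $V_n^\ast(\bms) = \min_{\bma} g_n(\bms,\bma)$. Because $c_n \uparrow c$ and $V_n^\ast \uparrow \bar V$ with all terms nonnegative, monotone convergence gives $g_n \uparrow g$ where $g(\bms,\bma) \triangleq c(\bms,\bma) + \gamma_c \int \bar V(\bmy)\,\cP(d\bmy|\bms,\bma)$, so $\min_\bma g(\bms,\bma) = T\bar V(\bms)$. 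Proving $\bar V = T\bar V$ thus amounts to justifying the interchange $\sup_n \min_\bma g_n = \min_\bma \sup_n g_n$. One inequality is immediate: for every $\bma$ and every $n$ we have $V_n^\ast(\bms) \le g_n(\bms,\bma) \le g(\bms,\bma)$, and taking $\sup_n$ then $\min_\bma$ gives $\bar V \le T\bar V$.

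The hard part will be the reverse inequality $T\bar V \le \bar V$, and this is precisely where the inf-compactness and lower semicontinuity of B1' enter. Fix $\bms$ with $\bar V(\bms) < \infty$ (the case $\bar V(\bms)=+\infty$ makes $T\bar V(\bms)\le\bar V(\bms)$ trivial), and let $\bma_n$ attain the minimum $V_n^\ast(\bms) = g_n(\bms,\bma_n)$, which exists since $g_n$ inherits inf-compactness and lower semicontinuity in $\bma$ from $c_n$. For $n \ge m$ one has $g_m(\bms,\bma_n) \le g_n(\bms,\bma_n) = V_n^\ast(\bms) \le \bar V(\bms)$, so $\bma_n$ lies in the sublevel set $\{\bma : g_m(\bms,\bma) \le \bar V(\bms)\}$, which is compact (a closed subset of the compact sublevel set $\{c_m \le \bar V(\bms)\}$). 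Extracting a subsequence $\bma_{n_k} \to \bma^\ast$, lower semicontinuity of $g_m$ yields $g_m(\bms,\bma^\ast) \le \liminf_k g_m(\bms,\bma_{n_k}) \le \liminf_k g_{n_k}(\bms,\bma_{n_k}) = \liminf_k V_{n_k}^\ast(\bms) = \bar V(\bms)$ for each fixed $m$; letting $m \to \infty$ and using $g_m \uparrow g$ gives $g(\bms,\bma^\ast) \le \bar V(\bms)$, hence $T\bar V(\bms) = \min_\bma g(\bms,\bma) \le \bar V(\bms)$. Combined with the previous paragraph this establishes $\bar V = T\bar V$, and it shows that the minimum defining $T\bar V$ is attained at $\bma^\ast$.

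Finally I would turn the fixed-point identity into optimality. The attained minimizer furnishes, via a measurable selection theorem, a stationary policy $f$ with $\bar V(\bms) = c(\bms,f(\bms)) + \gamma_c \int \bar V(\bmy)\,\cP(d\bmy|\bms,f(\bms))$; iterating this recursion and using nonnegativity of the costs and $\gamma_c \in (0,1)$ identifies $\bar V = V(f,\cdot) \ge V^\ast$. Together with $\bar V \le V^\ast$ from the first step, this gives $\bar V = V^\ast$ and finishes the proof. I expect the interchange of $\sup_n$ with $\min_\bma$ in the Bellman operator to be the main obstacle; the remainder is monotonicity bookkeeping, and that interchange is exactly what Assumption B1' (inf-compactness and lower semicontinuity) together with B2 (weak continuity, controlling the integral terms) are designed to handle.
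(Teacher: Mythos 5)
Your proof is correct, but note that the paper itself does not prove this proposition: it is reproduced from Theorem 5.1 of Hern\'andez-Lerma (1992) with no in-text argument, so the comparison is against that cited proof rather than anything written in the appendix. Your reconstruction is essentially the standard argument behind the cited result: monotone domination gives $V_n^\ast \uparrow \bar V \le V^\ast$; a min/sup interchange, powered by inf-compactness, lower semicontinuity and weak continuity, gives $\bar V = T\bar V$; and a measurable selector $f$ plus the iteration $\bar V = T_f \bar V \ge T_f^k 0 \uparrow V(f,\cdot)$ gives $\bar V \ge V^\ast$. Three points deserve care. First, the lower semicontinuity of $g_n(\bms,\cdot)$ is not inherited ``from $c_n$'' alone: it also requires that $V_n^\ast$ itself be l.s.c.\ (a conclusion of the cited Theorem 4.2, obtained via value iteration and Assumption B3) combined with weak continuity B2, so that $\bma \mapsto \int V_n^\ast(\bmy)\,\cP(d\bmy|\bms,\bma)$ is l.s.c.; you gesture at B2 but should make the dependence on l.s.c.\ of $V_n^\ast$ explicit. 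Second, the iteration only yields the inequality $\bar V \ge V(f,\cdot)$, not the equality you assert mid-proof: the tail term $\gamma_c^T \E^f \bar V(\bms_T)$ need not vanish when $\bar V$ is unbounded, and equality follows only a posteriori from $\bar V \le V^\ast$; fortunately only the inequality is used. Third, and most interestingly, your argument proves strictly more than the proposition states --- it establishes the Bellman equation $TV^\ast = V^\ast$ for the limit MDP together with an optimal stationary selector, which is exactly the property the paper's remark immediately after the proposition says ``we cannot generally claim'' without additional assumptions. There is no contradiction: the additional assumption is the inf-compactness packaged into B1', which the limit cost inherits (the sublevel set $\{\bma : c(\bms,\bma) \le r\} = \bigcap_n \{\bma : c_n(\bms,\bma) \le r\}$ is a nested intersection of compact sets, hence compact), and this is precisely what makes your interchange step work. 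So your route recovers the convergence statement of the proposition and, as a by-product, the stronger fixed-point conclusion that the paper deliberately declines to assert.
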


Note that we do not require the cost function $c$ to be bounded. This, however, comes at a price in that we cannot generally claim that $T V^\ast = V^\ast$. To ensure this property we need additional assumptions (see~\cite{hernandez1992discrete}). 

\subsection{Proof of  Theorem~\ref{thm:optimal_policy}}\label{app:saute_discussion}
Coming back to the Saut\'e MDP, recall that we define the following cost function for $\widetilde \cM_n$:
\begin{equation*}
    \begin{aligned}
         \widetilde c_n(\bms_t, \bmz_t, \bma_t) = 
                    \begin{cases}
                    c(\bms_t, \bma_t)  & \bmz_t \ge 0,  \\
                    n  & \bmz_t <0. 
                    \end{cases}
    \end{aligned}
\end{equation*}
Therefore, to prove Theorem~\ref{thm:optimal_policy}
we need to verify that Saut\'e MDP $\widetilde \cM_n$ satisfying Assumptions A1-A3 also satisfies Assumptions B1', B2-B4. According to Assumptions A1-A2, we consider bounded, continuous costs $c$ with compact action space $\cA$, hence Assumptions B1', B3, and B4 are satisfied. Assumptions B2 and A3 are identical. Note that, if the transition function $\cP$ is a Gaussian with continuous mean and variance, then Assumption A3 is satisfied  (cf.~\cite{arapostathis1993discrete}).

\section{State augmentation techniques}\label{app:state_augmentation}
\subsection{By Daryin and Kurzhanski}\label{app:state_augmentation_daryin}
\cite{daryin2005nonlinear} considered the classical control problem, i.e., the model is assumed to be known and the goal is to compute the optimal policy. They consider the following model:
\begin{equation*}
    \dot \bmx(t) = \bmA(t) \bmx(t) + \bmB(t) \bmu(t) + \bmC(t) \bmv(t),
\end{equation*}
where $\bmx(t)$ is the state, $\bmu(t)$ is the control signal and $\bmv(t)$ is an unknown disturbance. Both controls and disturbances are subject to hard bounds:
\begin{equation*}
    \bmu(t) \in \cU(t), \qquad \bmv(t) \in \cV(t),
\end{equation*}
where the time-varying sets $\cU(t)$ and $\cV(t)$ are also known. The controls are also subject to soft constraints:
\begin{equation*}
    \int\limits_{t_0}^{t_1} \|\bmu(t)\|_{\bmR(t)}^2 dt \le \bmk(t_0).
\end{equation*}
To avoid dealing with two-types of constraints the authors proposed to augment the state-space with a new state $k(t)$ as follows:
\begin{equation*}
\begin{aligned}
    \dot \bmx &= \bmA(t) \bmx(t) + \bmB(t) \bmu(t) + \bmC(t) \bmv(t),\\
    \dot \bmk &= -\|\bmu(t)\|_{\bmR(t)}^2.
\end{aligned}
\end{equation*}
Now the integral constraint can be enforced as an end point constraint $k(t_1) \ge 0$. We will not go into further detail about this work but mention that all the matrices $\bmA$, $\bmB$, $\bmC$, $\bmR$ as well as set-valued maps $\cU(t)$, $\cV(t)$ are assumed to be known. In our case, we assume unknown dynamics.

\subsection{By Chow et al} \label{app:state_augmentation_chow}
\cite{chow2017risk} considered safe RL with CVaR constraints and addressed the following optimization problem:
\begin{align}
        \min\limits_{\pi, \nu}~&\mathbb{E} \sum\limits_{t=0}^T  \gamma_c^t c(\bms_t, \bma_t), \label{si_prob:cvar_constrained_mdp}\\
        \text{s.t.: }& \bma_t \sim \pi(\cdot| \bms_t, \bms_{t-1}, \bma_{t-1}, \dots, , \bms_0, \bma_0) \notag\\
        &\nu + \frac{1}{1 - \alpha}\mathbb{E}~\relu \left(\sum\limits_{t=0}^T \gamma_{l}^t l(\bms_t, \bma_t) - \nu\right)  \le d \notag,
\end{align} 
where $T$ is the control horizon. In this formulation, one also needs to consider the target state $\bms_{\rm Tar}$, which signifies the end of the episode. 

For their state augmentation approach~\cite{chow2017risk} proposed the following augmented MDP:
\begin{equation}
\begin{aligned}
     \bms_{t+1} &\sim p(\cdot | \bms_t , \bma_t), \bms_0 \sim {\cal S}_0, \\
     \bmx_{t+1} &= (\bmx_t -  l(\bms_t, \bma_t))/ \gamma_l, \bmx_0 = \nu,
\end{aligned}\label{eq:chow_augmented_mdp}
\end{equation}
 and the augmented cost
 \begin{equation}
    \widetilde c_\lambda(\bms_t, \bmx_t, \bma_t) = \begin{cases}
     c(\bms_t, \bma_t) & \bms_t \ne \bms_{\rm Tar} \\
     \dfrac{\lambda \relu(-\bmx_t)}{1-\alpha} & \text{otherwise}
     \end{cases}
 \end{equation}
The optimization problem that they considered was as follows:
\begin{equation}\label{eq:chow_objective}
\begin{aligned}
    \min\limits_{\pi, \nu} \max_{\lambda \ge 0}~&\mathbb{E} \sum\limits_{t=0}^T  \gamma_c^t \widetilde c_\lambda(\bms_t, \bmx_t, \bma_t),  \\
    & \bma_t \sim \pi(\cdot| \bms_t, \bmx_t).
\end{aligned}
\end{equation}
This idea of state augmentation was introduced by~\cite{ott2010markov, bauerle2011markov} who showed that the optimal policy of a CVaR optimization problem (unconstrained) must depend on the current state $\bms_t$ as well as the history of the accumulated cost $\bmx_t$. In our understanding, however, the representation of the optimal policy for the CVaR-constrained optimization was not discussed by~\cite{ott2010markov, bauerle2011markov}. Hence there may still be an open question of the validity of the Bellman equation for Equation~\ref{si_prob:cvar_constrained_mdp} and the representation of the optimal policy. We stress that \cite{chow2017risk} provided many other theoretical results justifying their approach to CVaR constrained reinforcement learning, but perhaps some gaps remain. 

Note that the finite horizon is only a technical difference between our formulation and the formulation by~\cite{chow2017risk} and is of no consequence. The difference is the initial value of the augmented state $\bmx_0$, which is equal to $\nu$ instead of $d$ as it is in our case. Although this seems to be a subtle difference, it allows for many features such as plug-n-play methods, generalization across safety budgets, learning safe policy for all safety budgets $d$ in some interval $[d_{\rm lower}, d_{\rm upper}]$. 

\subsection{By Calvo-Fullana et al}\label{app:state_augmentation_calvo}
The authors consider the following problem 
\begin{equation}
\begin{aligned}
        \max\limits_{\pi}~&\lim\limits_{T\rightarrow\infty}\mathbb{E}_{\bms,\bma\sim\pi} \sum\limits_{t=0}^T r(\bms_t, \bma_t),\\
        \text{s.t.: }&\lim\limits_{T\rightarrow\infty}\mathbb{E}_{\bms,\bma\sim\pi} \sum\limits_{t=0}^T r_i(\bms_t, \bma_t) \ge d_i,\\
        & \bma_t \sim \pi(\cdot| \bms_t, \bms_{t-1}, \bma_{t-1}, \dots, , \bms_0, \bma_0),
\end{aligned} \label{si_prob:lim_mean}
\end{equation}
and denoted the objective as 
\begin{equation*}
    V_i(\pi) \triangleq \lim\limits_{T\rightarrow\infty} \frac{1}{T} \E_{\bms,\bma\sim\pi} \left[ \sum\limits_{t=0}^T r_i(\bms_t, \bma_t) \right],
\end{equation*}
and the optimal cost (sic) as $V_0(\pi^\ast)$. Then the authors defined the Lagrangian for the primal-dual solution:
\begin{equation*}
    \cL(\pi, \lambda) = V_0(\pi) + \sum\limits_{i =1}^m \lambda_i \left(V_i(\pi) - c_i\right).
\end{equation*}

The solution was proposed by computing $\argmax\limits_{\pi} \cL(\pi, \lambda)$, where the optimal Lagrangian multipliers need to be optimized over. \cite{calvo2021state} propose to update the multipliers as follows:
\begin{equation*}
    \lambda_{i, k+1} = \left[\lambda_{i,k} - \frac{\eta_\lambda}{T_0} \sum\limits_{t=k T_0}^{(k+1) T_0 - 1} (r_i(\bms_t, \bma_t) - d_i) \right],
\end{equation*}
where $\eta_\lambda$ is the step size, $T_0$ is the epoch duration, $k$ is the iteration index. The policy $\pi$ in this formulation depends on the state $\bms_t$ and Lagrangian multipliers $\lambda_i$. Using this idea the authors show that there exists a policy that allows constraint satisfaction with probability one:
\begin{equation*}
    \lim\limits_{T\rightarrow\infty}\mathbb{E}_{\bms,\bma\sim\pi} \sum\limits_{t=0}^T r_i(\bms_t, \bma_t) \ge d_i, \forall i \text{  a. s.}
\end{equation*}
similarly to our case. As we discuss, however, it is not necessary to use the Lagrangian formulation and such complicated constructions to arrive at a similar conclusion. Furthermore, it is not clear if the algorithm trains a policy satisfying the constraint almost surely.

\section{Implementation Details} \label{app:detailed_implementation_details}
Our implementation is freely available online~\cite{sootla_saute_2022_git}. 

\subsection{Saut\'e RL}
The main benefit of our approach to safe RL is the ability to extend it to \emph{any critic-based RL algorithm}. This is because we do not need to change the algorithm itself (besides some cosmetic changes), but create a wrapper around the environment. The implementation is quite straightforward and the only ``trick'' we had to resort to is normalizing the safety state by dividing with the safety budget:
\begin{equation*}
    \bmz_{t+1} = (\bmz_t - l(\bms_t, \bma_t) / d) / \gamma_l, \bmz_0 = 1.
\end{equation*}
Hence the variable $\bmz_t$ is always between zero and one. 

{\scriptsize
\begin{minted}{Python}
def safety_step(self, cost: np.ndarray) -> np.ndarray:
    """
    Update the normalized safety state
    """        
    # subtract the normalized cost
    self._safe_state -= cost / self.safe_budget
    # normalize by the discount factor
    self._safe_state /= self.safe_discount_factor
    return self._safe_state
\end{minted}
}

The step function has to be overloaded to augment the safe state and shape the cost.

{\scriptsize
\begin{minted}{Python}
def step(self, action:np.ndarray) -> np.ndarray:
    """
    Step into the environment 
    """
    # get the state of the environment
    next_obs, reward, done, info = super().step(action)        
    # get the safe state
    next_safe_state = self.safety_step(info['cost'])
    # shape the reward
    if next_safe_state <= 0:
        reward = 0 
    # augment the state
    augmented_state = np.hstack([next_obs, 
                                 next_safe_state])
    # save values 
    info['true_reward'] = reward         
    info['next_safe_state'] = next_safe_state
    return augmented_state, reward, done, info
\end{minted}
}

Note that in this implementation, we assume that the minimum reward is zero and the maximum reward is nonnegative. In some environments, we set a different minimum reward (i.e., we shape $\widetilde c_n$ with a different value $n$). Finally, resetting the environment requires only state augmentation.

{\scriptsize
\begin{minted}{Python}
def reset(self) -> np.ndarray:
    """
    Reset the environment
    """
    # get the state of the environment
    state = super().reset()
    # reset the safe state 
    self._safe_state = 1.0
    # augment the state
    augmented_state = np.hstack([state, 
                                 self._safe_state])
    return augmented_state
\end{minted}
}

We used safety starter agents~\cite{raybenchmarking} (Tensorflow == 1.13.1) as the core implementation for model-free methods and their Lagrangian versions (PPO, TRPO, SAC, CPO). We have tested some environments using stable baselines library~\cite{stable-baselines3} (PyTorch >= 1.8.1) and did not find drastic performance differences. Our choice of safety starter agents is guided only by the implementation of the Lagrangian version of PPO, TRPO, and SAC as well as CPO, which we modify by saut\'eing PPO, TPRO, and SAC.

\subsection{CVaR Lagrangian} \label{app:cvar}

Our implementation of CVaR Lagrangian leverages the implementation by~\cite{cowen2020samba} while implementing the algorithm within the safety starter agents~\cite{raybenchmarking}. Our implementation is quite straightforward as all we need to do is to replace the empirical estimator of the mean with the empirical $\cvar$ estimator. First, let us recall an alternative definition of $\cvar$ for a random variable $X$ with a probability density function $p(X)$ and cumulative probability distribution $F_X(x) = \P(X < x)= \int_{X < x} p(X) dX$ given by~\cite{rockafellar2000optimization}:
\begin{equation*}
\begin{aligned}
\var_\alpha(X) &= \min\{x\in \R | F_X(x) \ge \alpha \}, \\
\cvar_\alpha (X) &= (1-\alpha)^{-1} \int\limits_{F_X(x) \ge \var_\alpha(X)} F_X(x) p(X) dX.
\end{aligned}
\end{equation*}
This definition is equivalent to ours as shown in~\cite{rockafellar2000optimization}. Note that this definition implies that $\cvar_\alpha(X)$ is the mean of the tail of the distribution defined by quantile $\alpha$. Therefore, instead of using the definition
\begin{equation*}
\cvar_\alpha(X) = \min\limits_{\nu \in \R} \left(\eta + \frac{1}{1-\alpha} \mathbb{E}~\relu( X - \nu) \right)    
\end{equation*}
we can simply compute an empirical estimate of the quantile and then compute the empirical estimate of the mean of the tail. Following~\cite{tamar2015optimizing} our algorithm for the empirical $\cvar$ estimate is as follows.

\begin{enumerate}
    \item Let $x_1$, $\dots$, $x_n$ be drawn from $X$ and let $\alpha \in (0, 1)$;
    \item Determine the quantile $q = \lceil n (1- \alpha) \rceil$;
    \item Sort the samples to get $x_{i_1}$, $\dots$, $x_{i_n}$ with $x_{i_1} < \dots < x_{i_n}$ and $i_1$, $\dots$, $i_n$ being a permutation of $1$, $\dots$, $n$;
    \item Compute the empirical estimate as follows:
    \begin{equation*}
        \widehat{\cvar_\alpha}(X) = \frac{1}{n-q}\sum_{j = q}^n x_{i_j}.
    \end{equation*}
\end{enumerate}

Now this estimate can simply replace the empirical mean estimate for the computation of advantage for the safety cost. No other changes to the algorithms are required.

\section{Environments}\label{app:environments}

\begin{figure}
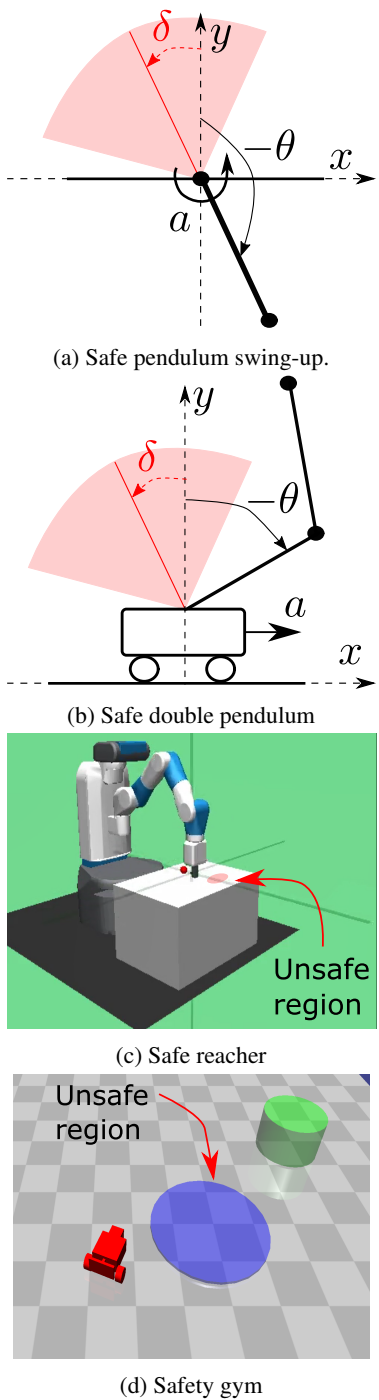

     \centering
     \begin{subfigure}[b]{0.6\columnwidth}
     \centering
     \includegraphics[width=0.99\textwidth]{figures/envs/safe_pendulum.pdf} 
     \caption{Safe pendulum swing-up.}
     \label{fig:safe_pendulum}
     \end{subfigure}
     \begin{subfigure}[b]{0.6\columnwidth}     
     \centering
     \includegraphics[width=0.99\textwidth]{figures/envs/safe_double_pendulum.pdf} 
     \caption{Safe double pendulum}
     \label{fig:safe_double_pendulum}
     \end{subfigure}
     \vspace{0.1mm}
     \begin{subfigure}[b]{0.6\columnwidth}     
     \centering
     \includegraphics[width=0.99\textwidth]{figures/envs/safe_reacher.pdf} 
     \caption{Safe reacher}
     \label{fig:safe_fetch_reacher}
     \end{subfigure}
     \begin{subfigure}[b]{0.58\columnwidth}     
     \centering
     \includegraphics[width=0.99\textwidth]{figures/envs/safety_gym.pdf} 
     \caption{Safety gym}
     \label{fig:static_env_gym}
     \end{subfigure}
    \caption{Panels a and b: safe pendulum environments. In both cases, $\theta$ - is the angle from the upright position, $a$ is the action, $\delta$ - is the unsafe pendulum angle, the safety cost is the distance toward the unsafe pendulum angle, which is incurred only in the red area. Panel e: safe reacher:  the robot needs to avoid the unsafe region. Panel d: a schematic depiction of the safety gym environment: robot needs to reach the goal while avoiding the unsafe region.}
    \label{fig:envs}
\end{figure}
\paragraph{Pendulum Swing-up.} We take the single pendulum swing-up from the classic control library in the Open AI Gym~\cite{brockman2016openai}. However, we define the instantaneous task cost following~\cite{cowen2020samba} as follows:
\begin{equation*}
    c(\bms, \bma) = 1 - \frac{\bmtheta ^2 + 0.1 \dot \bmtheta^2 + 0.001 \bma^2}{\pi^2 + 6.404},
\end{equation*}
which takes values between zero and one, since $\bma \in [-2, 2]$, $\bms[0] = \bmtheta\in [-\pi, \pi]$ $\bms[1] = \dot \bmtheta \in [-8,8]$. We define the instantaneous safety cost following~\cite{cowen2020samba}:
\begin{equation*}
    l = \begin{cases}
    1 - \dfrac{|\theta - \delta|}{50} & \text{ if } -25 \le \theta \le 75, \\
    0 & \text{otherwise,}
    \end{cases}
\end{equation*}
where $\theta$ is the angle (in degrees) of the pole deviation from the upright position. The cost is designed to create a trade-off between swinging up the pendulum and keeping away from the angle $\delta = 25^o$. This environment has three states (cosine and sine of $\theta$, as well as angular velocity $\dot \theta$) and one action. See the depiction in Figure~\ref{fig:safe_pendulum}.

\paragraph{Double Pendulum.} We take the double pendulum stabilization implementation by~\cite{todorov2012mujoco} using the Open AI Gym~\cite{brockman2016openai} interface (the environment \mintinline{python}{InvertedDoublePendulumEnv} from \mintinline{python}{gym.envs.mujoco}). We modify the environment by setting the maximum episode length to $200$ and divide the instantaneous reward by $10$. We used $n=200$ to get $\widetilde c_n$.
 
We define safety similarly to the pendulum swing-up case, i.e., we use the same instantaneous cost with $\theta$ is angle (in degrees) of the first pole deviation from the upright position and define the cost as follows:
\begin{equation*}
    l = \begin{cases}
    1 - \dfrac{|\theta - \delta|}{50} & \text{ if } -25 \le \theta \le 75, \\
    0 & \text{otherwise.}
    \end{cases}
\end{equation*}
This environment has eleven states and one action. See the depiction in Figure~\ref{fig:safe_double_pendulum}. 

\paragraph{Reacher.} We take the reacher implementation by~\cite{todorov2012mujoco} using the Open AI Gym~\cite{brockman2016openai} interface (\mintinline{python}{Reacher}). We add the following safety cost for this environment
\begin{equation*}
    l = \begin{cases}
    100 - 50 \cdot |\bmx - \bmx_{\rm target}| & \text{if } |\bmx - \bmx_{\rm target}| \le 0.5, \\
    0 & \text{otherwise,}
    \end{cases}
\end{equation*}
where $\bmx_{\rm target}$ is the position of the target (set to $\begin{pmatrix}
1.0 & 1.0 & 0.01 
\end{pmatrix}$) and $\bmx$ is the position of the arm in Cartesian coordinates.  The environment is schematically depicted in Figure~\ref{fig:safe_fetch_reacher}.
Overall the system has eleven states and two actions. 

\begin{table}[t]
    \centering
    \caption{
    	Default hyperparameters for TRPO, PPO, CPO
    } \label{table:parameters_PG}
    {
    \begin{tabular}{@{}llccc@{}}
    \toprule
    \multicolumn{2}{l}{Name} & Value \\ \midrule
    \multicolumn{1}{c}{\multirow{8}{*}{\rotatebox{90}{Common parameters}}} 
        & Network architecture         & [64,64]            \\
        & Activation                   & tahn                \\
        & Value function learning rate & 1e-3                \\
        & Task Discount Factor         & 0.99                \\
        & Lambda                       & 0.97                \\
        & N samples per epochs         & 1000                \\
        & N gradient steps             & 80                  \\
        & Target KL                    & 0.01                \\
    \midrule
    \multicolumn{1}{c}{\multirow{4}{*}{\rotatebox{90}{Safety}}}
        & Penalty learning rate        & 5e-2                \\
        & Safety Discount Factor       & 0.99                \\        
        & Safety Lambda                & 0.97                \\        
        & Initial penalty              & 1                   \\        
    \midrule
    \multicolumn{1}{c}{\multirow{4}{*}{\rotatebox{90}{PPO}}}
        & Clip ratio                  & 0.2                  \\
        & Policy learning rate        & 3e-4                 \\
        & Policy iterations           & 80                   \\
        & KL margin                   & 1.2                  \\
    \midrule
    \multicolumn{1}{c}{\multirow{4}{*}{\rotatebox{90}{TRPO/CPO}}}
        & Damping Coefficient         & 0.1                  \\
        & Backtrack Coefficient       & 0.8                  \\
        & Backtrack iterations        & 10                   \\
        & Learning Margin             & False                \\
    \bottomrule
    \end{tabular}
    }
\end{table}
\paragraph{Safety Gym.} We take the Static environment from~\cite{yang2021wcsac}, which is modifications of the safety gym environments~\cite{raybenchmarking}. The unsafe region is placed near the goal and the robot is placed randomly, after the goal is reached the episode ends. The safety cost is incurred at every time step spent in the blue area. This environment is schematically depicted in Figure~\ref{fig:static_env_gym}. We consider two robots: ``point'' with $46$ states and $2$ actions, and ``car'' with $56$ states and $2$ actions. Note that our variant of the safety gym environment has deterministic constraints, not randomly placed constraints as is common in other safety gym environments. We did so because our algorithm aims to deliver almost surely constraints and hence we did not want to contaminate our experiments with unsolvable problems.

\section{Experiment details} \label{app:default_hyperparameters}
We take the default parameters presented in Tables~\ref{table:parameters_PG} and \ref{table:parameters_SAC}. For all modifications of TRPO/PPO  and SAC, the default parameters and the code base is the same, which makes the direct comparisons fairer. Note that these parameters are used in the safety starter agents implementation of these algorithms~\cite{raybenchmarking}.

\begin{table}[t]
    \centering
    \caption{
    	Default hyperparameters for SAC
    } \label{table:parameters_SAC}
    \begin{tabular}{ll}
    \toprule
        Name & Value \\ \midrule
        Network architecture         & [256,256]           \\
        Activation                   & ReLU               \\
        Value function learning rate & 5e-4                \\
        Policy learning rate         & 5e-4                \\
        $\alpha$ learning rate       & 5e-2                \\
        Batch size                   & 1024                \\
        Task Discount Factor         & 0.99                \\
        N samples per epochs         & 200                \\
        Training frequency           & 1                   \\
        Target entropy               & -$|\cA|$        \\
        $\tau$                       & 0.005               \\
        Size of the replay buffer    & 1e6                 \\
        N start updates              & 1e3                 \\
        Penalty learning rate        & 5e-2                \\
        Safety Discount Factor       & 0.99                \\        
    \bottomrule
    \end{tabular}
\end{table}

\paragraph{Pendulum Swing-up.} We use default parameters. We plot evaluation during training in Figures~\ref{fig:saute_RL_single},~\ref{fig:trpo_single_pendulum},~\ref{fig:ppo_single_pendulum} and~\ref{fig:sac_single_pendulum}. Note that ``saut\'eed'' algorithms achieve a safe almost surely policy after $200$ episodes of training. We plot maximum incurred cost over the episode to evaluate the constraint violation during training. 

\paragraph{Double Pendulum} We use default parameters for Vanilla TRPO, Saut\'e TRPO, and CPO. We run a hyper-parameter search for Lagrangian TRPO by varying the penalty learning rate (5e-3, 1e-2, 5e-2), backtrack iterations ($10$, $15$, $20$), value function learning rate (1e-4, 1e-3, 5e-3) and steps per epoch ($4000$, $10000$, $20000$). However, we did not find any parameter setting that performs significantly better than the default one. We plot evaluation during training in Figure~\ref{fig:trpo_double_pendulum}. We plot the maximum incurred cost over the episode to evaluate the constraint violation during training.  We also present results for ablation on the cost function in Table~\ref{tab:reward_shaping}. 

\paragraph{Reacher} We use default parameters for Vanilla TRPO, Saut\'e TRPO, and CPO. We run a hyper-parameter search for Lagrangian TRPO by varying the penalty learning rate (5e-3, 1e-2, 5e-2), backtrack iterations ($10$, $15$, $20$), value function learning rate (1e-4, 1e-3, 5e-3) and steps per epoch ($4000$, $10000$, $20000$). We finally determined the following customized parameters after comparison: penalty learning rate (3e-2), backtracking iterations ($15$), value function learning rate (1e-2), steps per epoch ($4000$), and number of epochs($1000$). Using this parameter setting makes the average cost fluctuates slightly around the safety budget, unlike the average cost of CPO which converges to the safety budget.

\paragraph{Safety Gym} We use default parameters for Vanilla TRPO, Lagrangian TRPO, and CPO in both Point Goal and Car Goal. We run a hyper-parameter search for Saut\'e TRPO by varying the penalty learning rate (5e-3, 1e-2, 5e-2), backtrack iterations ($10$, $15$, $20$), value function learning rate (1e-4, 1e-3, 5e-3) and steps per epoch ($4000$, $6000$, $10000$, $15000$). We finally determined the following customized parameters after comparison: penalty learning rate (3e-2), backtracking iterations ($15$), value function learning rate (5e-3), steps per epoch ($10000$), and the number of epochs($2000$). Using this parameter setting keeps most of the costs below the safety budget while most of the task costs are similar to the task costs of other algorithms (see percentiles and medians in the box plots).

\begin{figure}
     \centering
     \begin{subfigure}[b]{0.7\columnwidth}
     \centering
     \includegraphics[width=0.99\textwidth]{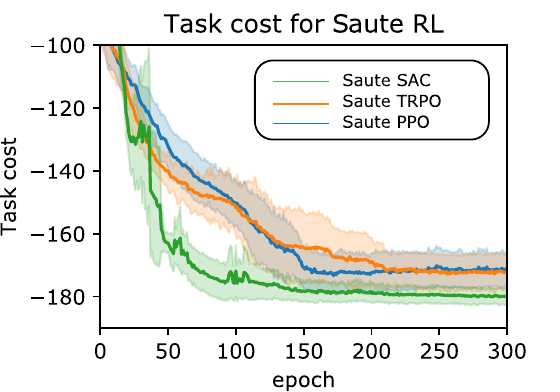} 
     \caption{Mean task costs}
     \label{fig:mean_task_cost_saute_single}
     \end{subfigure} 
     \begin{subfigure}[b]{0.7\columnwidth}     
     \centering
     \includegraphics[width=0.99\textwidth]{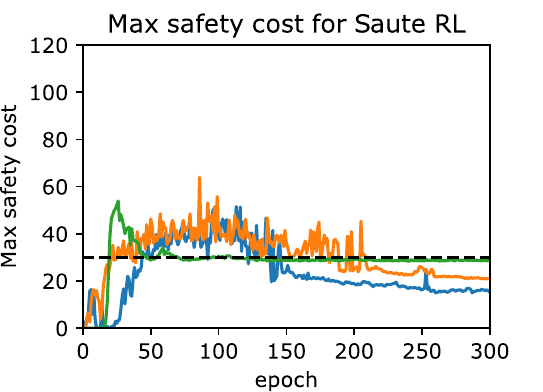}
     \caption{Max safety costs}
     \label{fig:max_safety_cost_saute_single}
     \end{subfigure}
      \begin{subfigure}[b]{0.7\columnwidth}
     \centering
     \includegraphics[width=0.99\textwidth]{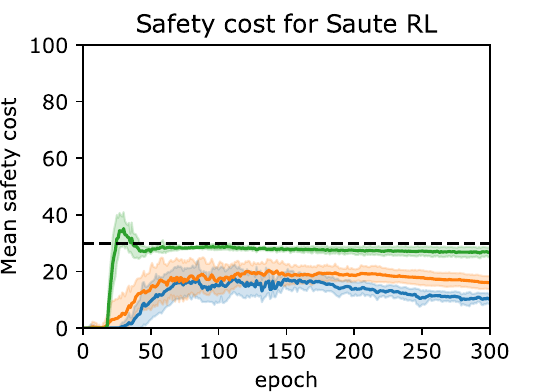} 
     \caption{Mean safety costs}
     \label{fig:mean_safety_cost_saute_single}
     \end{subfigure} 
     \begin{subfigure}[b]{0.7\columnwidth}     
     \centering
     \includegraphics[width=0.99\textwidth]{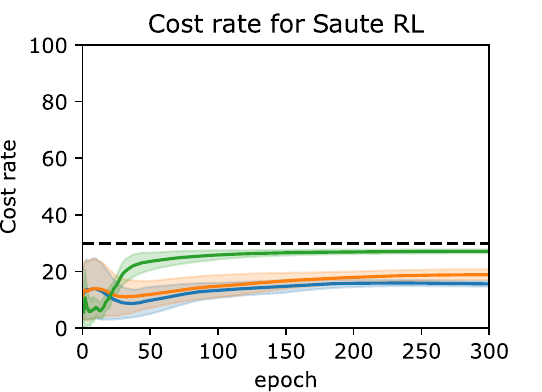}
     \caption{Cost rates}
     \label{fig:cost_rate_saute_single}
     \end{subfigure}
    \caption{Evaluation results of Saut\'e PPO, Saut\'e TRPO and Saut\'e SAC on the pendulum swing-up task over 5 different seeds with $100$ trajectories for every seed. Average task cost are depicted in Panel a (shaded areas are the standard deviation over all runs), maximum incurred safety costs are depicted in Panel b, average incurred safety costs are depicted in Panel c (shaded areas are the standard deviation over all runs), and cost rates are depicted in Panel d (shaded areas are the standard deviation over different seeds). The black-dotted line is the safety budget used for training.}
    \label{fig:saute_RL_single}
\end{figure}

\begin{figure}
     \centering
     \begin{subfigure}[b]{0.69\columnwidth}
     \centering
     \includegraphics[width=0.99\textwidth]{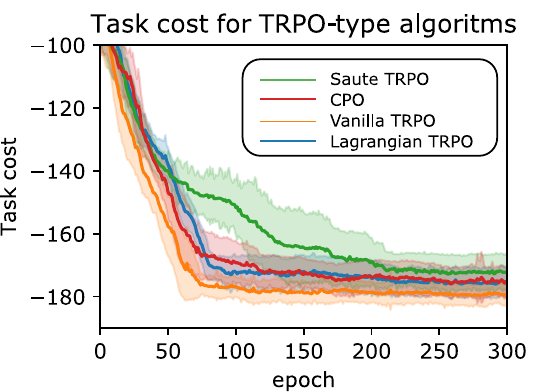}
     \caption{Mean task costs}
     \label{fig:mean_task_cost_trpo_single}
     \end{subfigure}
     \begin{subfigure}[b]{0.69\columnwidth}
     \centering
     \includegraphics[width=0.99\textwidth]{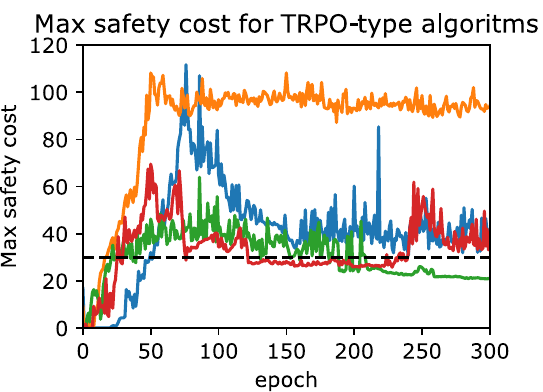}
     \caption{Max safety costs}
     \label{fig:max_safety_cost_trpo_single}
     \end{subfigure}
     \begin{subfigure}[b]{0.69\columnwidth}     
     \centering
     \includegraphics[width=0.99\textwidth]{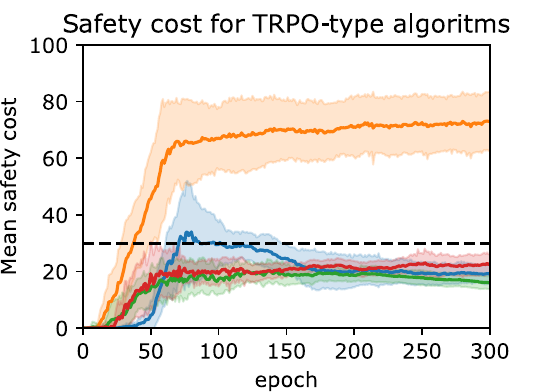}
     \caption{Mean safety costs}
     \label{fig:maean_safety_cost_trpo_single}
     \end{subfigure}
     \begin{subfigure}[b]{0.69\columnwidth}     
     \centering
     \includegraphics[width=0.99\textwidth]{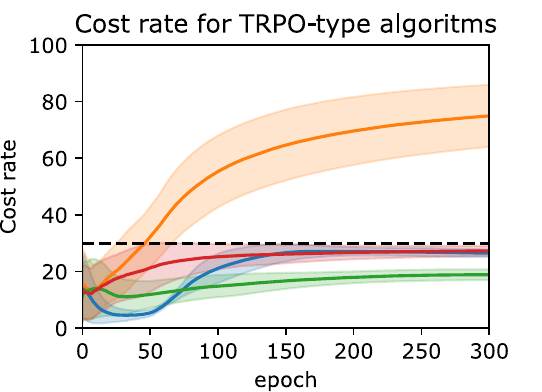}
     \caption{Cost rate}
     \label{fig:cost_rate_trpo_single}
     \end{subfigure}     
    \caption{Evaluation results of Vanilla TRPO, Saut\'e TRPO, Lagrangian TRPO and CPO on the pendulum swing-up task over 5 different seeds with $100$ trajectories for every seed. Average task cost are depicted in Panel a (shaded areas are the standard deviation over all runs), maximum incurred safety costs are depicted in Panel b, average incurred safety costs are depicted in Panel c (shaded areas are the standard deviation over all runs), and cost rates are depicted in Panel d (shaded areas are the standard deviation over different seeds). The black-dotted line is the safety budget used for training. }     \label{fig:trpo_single_pendulum}
\end{figure}

\begin{figure}
     \centering
     \begin{subfigure}[b]{0.69\columnwidth}
     \centering
     \includegraphics[width=0.99\textwidth]{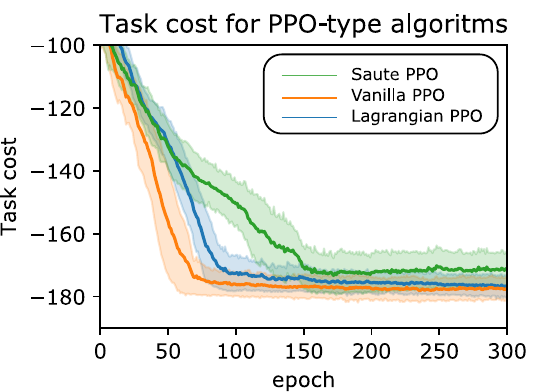}
     \caption{Mean task costs}
     \label{fig:mean_task_cost_ppo_single}
     \end{subfigure}
     \begin{subfigure}[b]{0.69\columnwidth}
     \centering
     \includegraphics[width=0.99\textwidth]{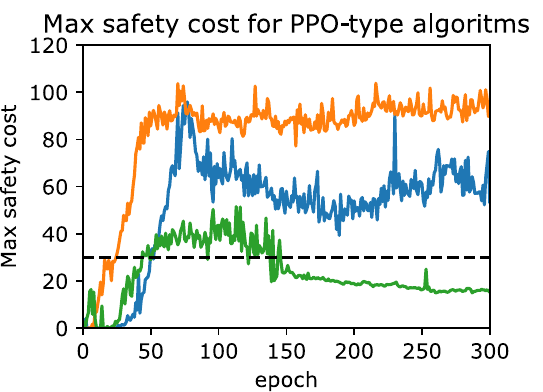}
     \caption{Max safety costs}
     \label{fig:max_safety_cost_ppo_single}
     \end{subfigure}
     \begin{subfigure}[b]{0.69\columnwidth}     
     \centering
     \includegraphics[width=0.99\textwidth]{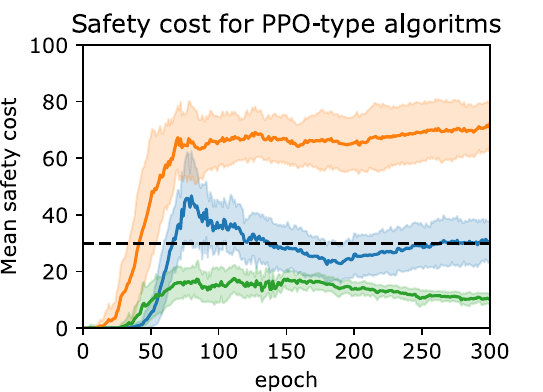}
     \caption{Mean safety costs}
     \label{fig:mean_safety_cost_ppo_single}
     \end{subfigure}
     \begin{subfigure}[b]{0.69\columnwidth}     
     \centering
     \includegraphics[width=0.99\textwidth]{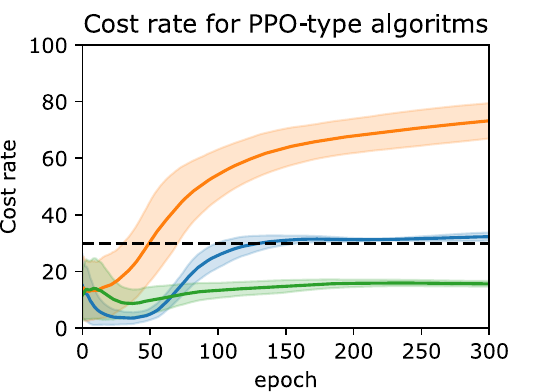}
     \caption{Cost rate}
     \label{fig:cost_rate_ppo_single}
     \end{subfigure}     
    \caption{Evaluation results of Vanilla PPO, Saut\'e PPO and Lagrangian PPO on the pendulum swing-up task over 5 different seeds with $100$ trajectories for every seed. Average task cost are depicted in Panel a (shaded areas are the standard deviation over all runs), maximum incurred safety costs are depicted in Panel b, average incurred safety costs are depicted in Panel c (shaded areas are the standard deviation over all runs), and cost rates are depicted in Panel d (shaded areas are the standard deviation over different seeds). The black-dotted line is the safety budget used for training. }    \label{fig:ppo_single_pendulum}
\end{figure}

\begin{figure}
     \centering
     \begin{subfigure}[b]{0.69\columnwidth}
     \centering
     \includegraphics[width=0.99\textwidth]{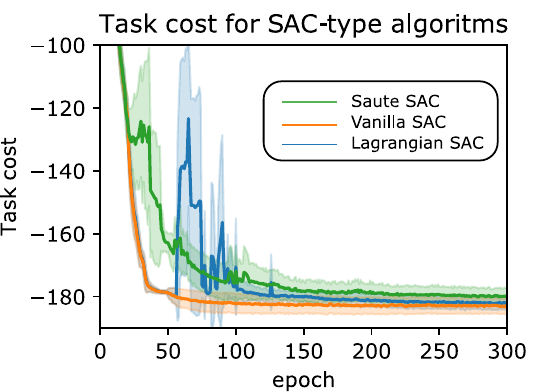}
     \caption{Mean task costs}
     \label{fig:mean_task_cost_sac_single}
     \end{subfigure}
     \begin{subfigure}[b]{0.69\columnwidth}
     \centering
     \includegraphics[width=0.99\textwidth]{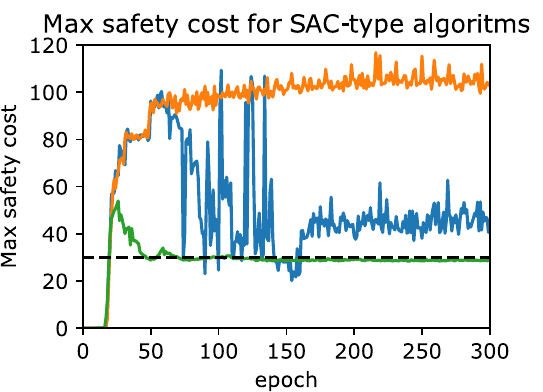}
     \caption{Max safety costs}
     \label{fig:max_safety_cost_sac_single}
     \end{subfigure}
     \begin{subfigure}[b]{0.69\columnwidth}     
     \centering
     \includegraphics[width=0.99\textwidth]{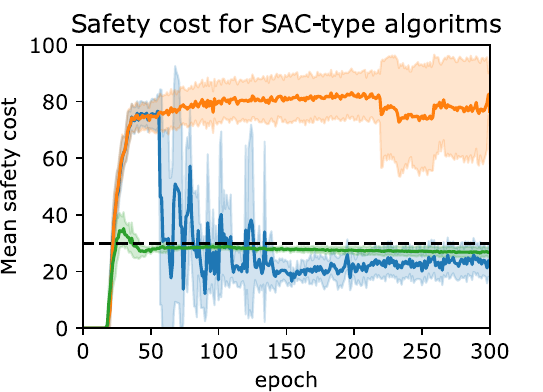}
     \caption{Mean safety costs}
     \label{fig:mean_safety_cost_sac_single}
     \end{subfigure}
     \begin{subfigure}[b]{0.69\columnwidth}     
     \centering
     \includegraphics[width=0.99\textwidth]{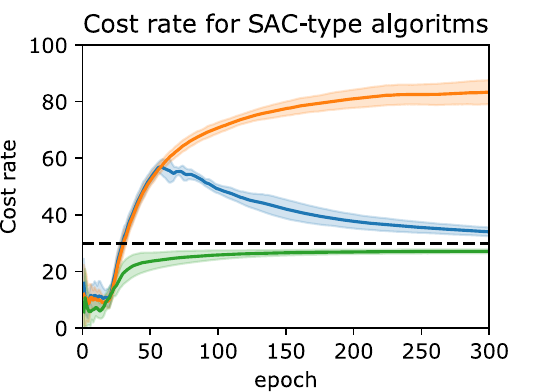}
     \caption{Cost rate}
     \label{fig:cost_rate_sac_single}
     \end{subfigure}     
    \caption{Evaluation results of Vanilla SAC, Saut\'e SAC and Lagrangian SAC on the pendulum swing-up task over 5 different seeds with $100$ trajectories for every seed. Average task cost are depicted in Panel a (shaded areas are the standard deviation over all runs), maximum incurred safety costs are depicted in Panel b, average incurred safety costs are depicted in Panel c (shaded areas are the standard deviation over all runs), and cost rates are depicted in Panel d (shaded areas are the standard deviation over different seeds). The black-dotted line is the safety budget used for training. }     \label{fig:sac_single_pendulum}
\end{figure}

\begin{figure}
     \centering
     \begin{subfigure}[b]{0.33\textwidth}
     \centering
     \includegraphics[width=0.99\textwidth]{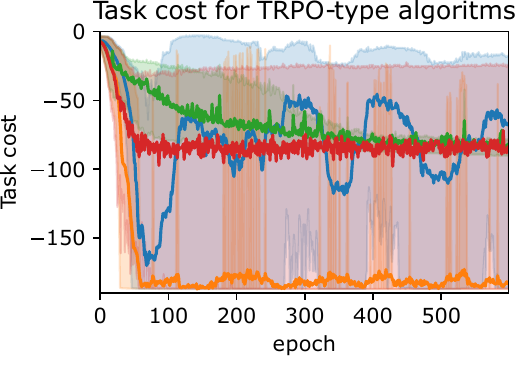}
     \caption{Mean task costs}
     \label{fig:mean_task_cost_trpo_double_pendulum}
     \end{subfigure}
     \begin{subfigure}[b]{0.33\textwidth}
     \centering
     \includegraphics[width=0.99\textwidth]{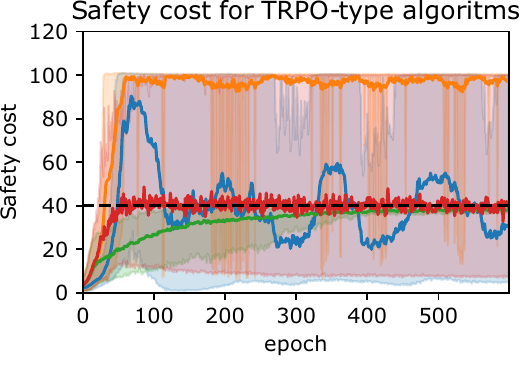}
     \caption{Mean safety costs}
     \label{fig:mean_safety_cost_trpo_double_pendulum}
     \end{subfigure}     
     \begin{subfigure}[b]{0.33\textwidth}
     \centering
     \includegraphics[width=0.99\textwidth]{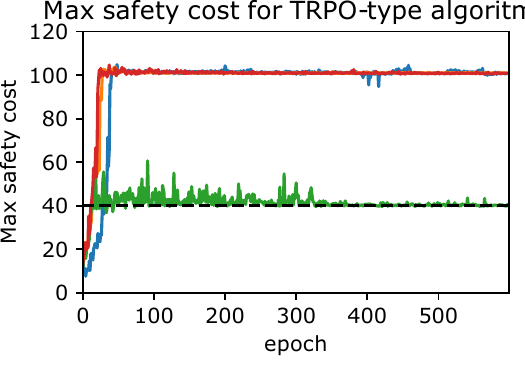}
     \caption{Max safety costs}
     \label{fig:max_safety_cost_trpo_double_pendulum}
     \end{subfigure}
    \caption{Evaluation results  on the double pendulum environment. Vanilla TRPO, Saut\'e TRPO, Lagrangian TRPO and CPO are evaluated on 5 different seeds with $100$ trajectories for every seed. Average task costs are depicted in Panel a (shaded areas are $80\%$ percentile intervals), Average task costs are depicted in Panel b (shaded areas are $80\%$ percentile intervals), maximum incurred safety costs are depicted in Panel c. The black-dotted line is the safety budget ($40$) used for training}
    \label{fig:trpo_double_pendulum}
\end{figure}

\begin{table}\centering
\caption{Task (bold burgundy) and safety (italic blue) costs for Saut\'e TRPO and various reshaped costs $\widetilde c_n$. The first value in the bracket is $5\%$ percent quantile, the second is the mean, and the the third is $95\%$ percent quantile.}\label{tab:reward_shaping}
\resizebox{\columnwidth}{!}{
\begin{tabular}{rcc}
\toprule
n & Task costs & Safety costs \\
\midrule
0    &    {\color{cb-burgundy}$\bf [-61.47, -74.87, -86.67]$ }& {\color{cb-blue-sky} $\it [32.49, 37.56, 39.68]$ }\\
10   &    {\color{cb-burgundy}$\bf [-64.34, -71.68, -83.01]$ }& {\color{cb-blue-sky} $\it [36.84, 38.38, 39.47]$} \\
100  &    {\color{cb-burgundy}$\bf [-72.65, -79.29, -91.18]$} & {\color{cb-blue-sky} $\it [35.91, 37.71, 39.69]$ }\\
1000 &    {\color{cb-burgundy}$\bf [-80.24, -83.95, -89.16]$} & {\color{cb-blue-sky} $\it [37.05, 38.15, 39.40]$} \\
10000  &  {\color{cb-burgundy}$\bf [-67.24, -78.95, -86.73]$} & {\color{cb-blue-sky} $\it[36.94, 38.09, 39.50]$} \\
100000 &  {\color{cb-burgundy}$\bf [-67.19, -76.42, -85.81]$} & {\color{cb-blue-sky} $\it[36.22, 37.92, 39.45]$} \\
\bottomrule
\end{tabular}
}
\end{table}

\section{Further experiments} \label{app:new_experiments}
\subsection{SAC with narrower networks}
To compare directly PPO, TRPO, and SAC we performed another experiment while setting policy network architecture to $[64,64]$ for SAC. We report the results in Figure~\ref{fig:sac_single_pendulum_2}. It is clear that Saut\'e SAC is still preferable to Lagrangian SAC in terms of safety almost surely, but increasing the width of the neural networks for its actors and critics improves the performance at the test time. In this work, we have not experimented more with SAC algorithms since TRPO and PPO-based are the basis for the state-of-the-art approaches.

\begin{figure}
     \centering
     \begin{subfigure}[b]{0.69\columnwidth}
     \centering
     \includegraphics[width=0.99\textwidth]{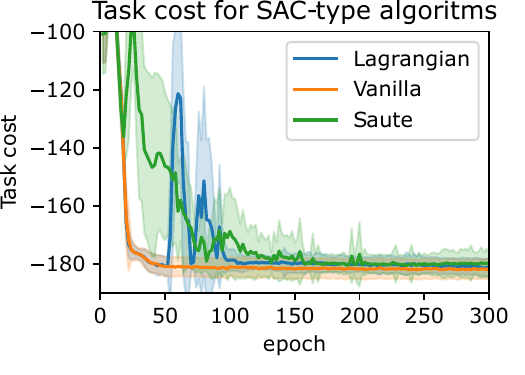}
     \caption{Mean task costs}
     \label{fig:mean_task_cost_sac_single_2}
     \end{subfigure}
     \begin{subfigure}[b]{0.69\columnwidth}
     \centering
     \includegraphics[width=0.99\textwidth]{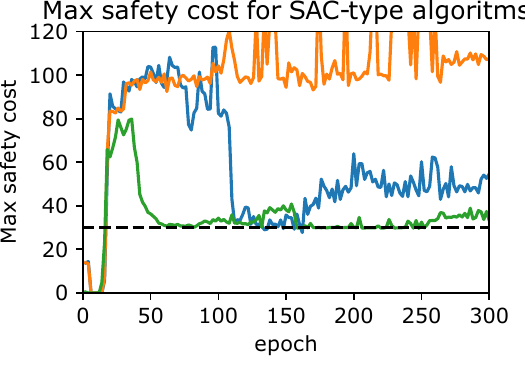}
     \caption{Max safety costs}
     \label{fig:max_safety_cost_sac_single_2}
     \end{subfigure}
     \begin{subfigure}[b]{0.69\columnwidth}     
     \centering
     \includegraphics[width=0.99\textwidth]{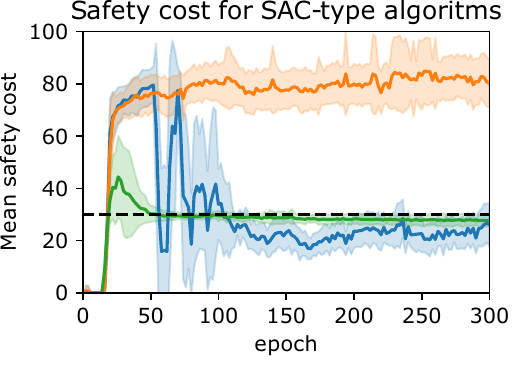}
     \caption{Mean safety costs}
     \label{fig:mean_safety_cost_sac_single_2}
     \end{subfigure}
     \begin{subfigure}[b]{0.69\columnwidth}     
     \centering
     \includegraphics[width=0.99\textwidth]{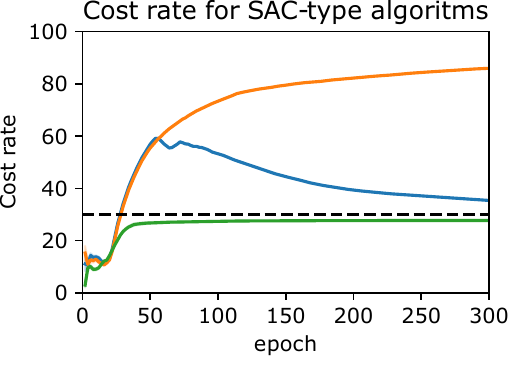}
     \caption{Cost rate}
     \label{fig:cost_rate_sac_single_2}
     \end{subfigure}     
    \caption{Evaluation results of Vanilla SAC, Saut\'e SAC and Lagrangian SAC on the pendulum swing-up task over 5 different seeds with $100$ trajectories for every seed. Here we set the network architecture to $[64,64]$. These results are comparable to the results in Figure~\ref{fig:sac_single_pendulum}, where the network architecture is $[256, 256]$}     \label{fig:sac_single_pendulum_2}
\end{figure}

\subsection{Comparisons to PID-Lagrangian}\label{app:pidl-comparison}
Finally, we compare our method to the PID-Lagrangian approach~\cite{stooke2020responsive}, and we do so on the same tasks and environments. However, instead of saving the policies and validating them separately, we use the training data as it was done in~\cite{stooke2020responsive}. We do so to perform a fair comparison to~\cite{stooke2020responsive}. We make the similar adjustments to the other baselines and present results in Tables~\ref{tab:reacher}, \ref{tab:point}, \ref{tab:car}. We did not observe a significant difference in the performance of PID-Lagrangian and TRPO-Lagrangian even after significant tuning efforts and hence our initial conclusions hold. Also, note that all Lagrangian algorithms did not converge on the Reacher environment. 

\begin{table}
    \centering
\caption{Evaluation of Reacher during training. We report mean $\pm$ standard deviation for task costs; mean, $90\%$ quantile, $99\%$ quantile for safety costs. We havve used $K_i=0.01$, $K_p =0.1$ for PID-L, and $\alpha=0.01$ for CVaR-TRPO. }
\begin{tabular}{lccc}
\toprule
{} &               Task &                 Safety \\
\midrule
Vanilla &  $-60.85 \pm 0.10$ &  $18.15, 19.14, 19.36$ \\
Saut\'e   &  $-61.35 \pm 0.44$ &  $ \textbf{4.51, 8.78, 9.12}$\\
CPO     &  $-60.88 \pm 0.10$ &   $9.95, 11.69, 13.18$ \\
TRPO-L  &  $-61.15 \pm 0.44$ &   $6.44, 12.44, 14.85$ \\
CVaR-L  &  $-60.91 \pm 0.11$ &   $7.81, 13.90, 17.41$ \\
PID-L   &  $-61.17 \pm 0.13$ &   $5.68, 10.02, 10.94$ \\
\bottomrule
\end{tabular}
\label{tab:reacher}
\end{table}

\begin{table}
    \centering
\caption{Evaluation of Point Goal during training. We report mean $\pm$ standard deviation for task costs; mean, $90\%$ quantile, $99\%$ quantile for safety costs. We havve used $K_i=0.01$, $K_p =0.1$ for PID-L, and $\alpha=0.01$ for CVaR-TRPO.}
    \begin{tabular}{lccc}
    \toprule
    {} &             Task &                 Safety \\
    \midrule
    Vanilla &  $2.89 \pm 0.47$ &  $16.18, 52.00, 55.08$ \\
    Saut\'e &  $2.77 \pm 0.59$ &   $\textbf{0.19, 0.00, 0.00}$ \\
    CPO     &  $2.93 \pm 0.49$ &  $17.92, 52.00, 58.04$ \\
    TRPO-L  &  $2.96 \pm 0.45$ &  $19.21, 52.00, 55.05$ \\
    CVaR-L  &  $2.90 \pm 0.48$ &    $0.93, 0.00, 33.02$ \\
    PID-L   &  $2.90 \pm 0.60$ &  $18.24, 52.00, 59.78$ \\
    \bottomrule
    \end{tabular}
\label{tab:point}
\end{table}

\begin{table}
    \centering
\caption{Evaluation of Car Goal during training. We report mean $\pm$ standard deviation for task costs; mean, $90\%$ quantile, $99\%$ quantile for safety costs. We havve used $K_i=0.01$, $K_p =0.1$ for PID-L, and $\alpha=0.01$ for CVaR-TRPO.}
    \begin{tabular}{lccc}
    \toprule
    {} &             Task &                 Safety \\
    \midrule
    Vanilla &  $2.96 \pm 0.46$ &  $19.88, 52.00, 68.02$ \\
    Saut\'e   &  $2.81 \pm 0.45$ &  $\textbf{0.22, 0.00, 0.00}$ \\
    CPO     &  $2.91 \pm 0.46$ &  $16.75, 53.00, 73.00$ \\
    TRPO-L  &  $2.91 \pm 0.47$ &  $16.65, 51.00, 64.07$ \\
    CVaR-L  &  $2.72 \pm 0.64$ &    $1.02, 0.00, 39.00$ \\
    PID-L   &  $2.91 \pm 0.61$ &  $13.72, 48.00, 83.00$ \\
    \bottomrule
    \end{tabular}
\label{tab:car}
\end{table}

\end{document}